\PassOptionsToPackage{table}{xcolor}
\documentclass{article}
\usepackage{referencepaper}

\usepackage{amsmath,amsfonts,bm}

\def\eqref#1{equation~\ref{#1}}

\def\1{\bm{1}}

\DeclareMathAlphabet{\mathsfit}{\encodingdefault}{\sfdefault}{m}{sl}
\SetMathAlphabet{\mathsfit}{bold}{\encodingdefault}{\sfdefault}{bx}{n}

\DeclareMathOperator*{\argmax}{arg\,max}

\usepackage{amssymb,amsthm}
\theoremstyle{definition}

\theoremstyle{plain}
\newtheorem{proposition}{Proposition}

\theoremstyle{remark}
\newtheorem{remark}{Remark}
\usepackage{booktabs,multirow,array,tabularx,threeparttable}
\usepackage{float,placeins,flafter}
\usepackage{algorithm}
\usepackage{algpseudocode}
\usepackage{graphicx}
\usepackage{pifont}
\usepackage{enumitem}
\newcommand{\stitle}[1]{\vspace{1mm} \noindent {\bf #1}}
\usepackage{xcolor}

\usepackage{hyperref}
\usepackage{url}
\usepackage{etoc}

\newcolumntype{Y}{>{\raggedright\arraybackslash}X}
\newcommand{\apptabsetup}{%
  \small
  \setlength{\tabcolsep}{5pt}%
  \renewcommand{\arraystretch}{1.12}%
}

\definecolor{AntBright}{HTML}{0066FF}
\definecolor{AntDeep}{HTML}{103CC8}
\definecolor{AntAqua}{HTML}{7DFFFF}
\definecolor{AntInk}{HTML}{000000}
\definecolor{AntSlate}{HTML}{526176}
\definecolor{AntRule}{HTML}{C9DAEF}
\definecolor{AntBlue}{HTML}{2F5F9F}
\definecolor{AntPale}{HTML}{EDF5FF}
\definecolor{AntPaleStrong}{HTML}{DCEBFA}
\definecolor{AntOracle}{HTML}{F1F3F5}

\newcommand{\viss}{ViSS}
\newcommand{\tvs}{TVS}

\newcommand{\cone}{\ding{182}}
\newcommand{\ctwo}{\ding{183}}
\newcommand{\RQ}[1]{\textbf{RQ#1}}

\newcommand{\sbrt}{StreamingBench}
\newcommand{\ovob}{OVO-Bench}

\newcommand{\best}[1]{\textbf{#1}}

\newcommand{\tabsection}[2]{\multicolumn{#1}{@{}l}{\itshape #2}\\}
\newcommand{\vissrow}{\rowcolor{AntPale}}
\newcommand{\tvsrow}{\rowcolor{AntPaleStrong}}

\newcommand{\SBInstruct}{80.52}
\newcommand{\SBBaselineOP}{84.74}
\newcommand{\SBBaselineCR}{71.88}
\newcommand{\SBBaselineCS}{90.85}
\newcommand{\SBBaselineATP}{89.44}
\newcommand{\SBBaselineEU}{76.10}
\newcommand{\SBBaselineTR}{89.10}
\newcommand{\SBBaselinePR}{81.48}
\newcommand{\SBBaselineSU}{78.05}
\newcommand{\SBBaselineACP}{79.04}
\newcommand{\SBBaselineCT}{41.97}
\newcommand{\OVOInstruct}{58.97}
\newcommand{\OVOInstructOCR}{95.30}
\newcommand{\OVOInstructACR}{80.73} 
\newcommand{\OVOInstructATR}{82.76}
\newcommand{\OVOInstructSTU}{73.60}
\newcommand{\OVOInstructFPD}{73.27}
\newcommand{\OVOInstructOJR}{82.61}
\newcommand{\OVOInstructRT}{81.38}
\newcommand{\OVOInstructEPM}{54.55}
\newcommand{\OVOInstructASI}{62.16}
\newcommand{\OVOInstructHLD}{37.63}
\newcommand{\OVOInstructBwd}{51.45}
\newcommand{\OVOInstructREC}{19.91}
\newcommand{\OVOInstructSSR}{66.93}
\newcommand{\OVOInstructCRR}{45.42}
\newcommand{\OVOInstructFwd}{44.09}

\newcommand{\SBStrictGeneric}{84.97} 
\newcommand{\OVOStrictGeneric}{63.48}
\newcommand{\StrictGenericHLD}{67.20} 
\newcommand{\StrictGenericEPM}{56.23} 
\newcommand{\StrictGenericASI}{60.81} 
\newcommand{\StrictGenericREC}{20.63}
\newcommand{\OVOStrictGenericMean}{63.44}

\newcommand{\OVOViSSMean}{65.16}

\newcommand{\OVOPairedSeedGap}{+1.72}
\newcommand{\SBStrictGenericMean}{84.88}
\newcommand{\SBViSSMean}{84.70}
\newcommand{\SBPairedSeedGap}{-0.17} 
\newcommand{\HLDStrictGenericMean}{67.20} 
\newcommand{\HLDViSSMean}{74.91} 
\newcommand{\HLDPairedSeedGap}{+7.71} 
\newcommand{\OVOPairedBootstrapLow}{+0.80}
\newcommand{\OVOPairedBootstrapHigh}{+2.60}
\newcommand{\HLDPairedBootstrapLow}{+4.50}
\newcommand{\HLDPairedBootstrapHigh}{+11.30}
\newcommand{\CTFull}{44.04}
\newcommand{\RECFull}{21.20}
\newcommand{\HLDFull}{75.27}
\newcommand{\SBNoSuff}{84.89}
\newcommand{\OVONoSuff}{63.52}
\newcommand{\CTNoSuff}{46.63}
\newcommand{\RECNoSuff}{20.63}
\newcommand{\HLDNoSuff}{70.97}
\newcommand{\SBNoRep}{84.65}
\newcommand{\OVONoRep}{65.70}

\newcommand{\SBSealed}{84.77}
\newcommand{\OVOSealed}{65.21}
\newcommand{\SBAgentConf}{86.33}
\newcommand{\OVOAgentConf}{69.77}

\newcommand{\SBOracleThree}{91.42}
\newcommand{\OVOOracleThree}{78.52}
\newcommand{\SBEventOnly}{85.69}
\newcommand{\OVOEventOnly}{67.67}
\newcommand{\SBMaxConfidence}{86.77}
\newcommand{\OVOMaxConfidence}{68.43}
\newcommand{\SBNoAcquisition}{86.29}
\newcommand{\OVONoAcquisition}{69.72}
\newcommand{\SBRandomView}{86.05}
\newcommand{\OVORandomView}{66.93}
\newcommand{\SBHistoryOnly}{86.45}
\newcommand{\OVOHistoryOnly}{68.84}
\newcommand{\SBStaticHistorySeventeen}{85.69} 

\newcommand{\SBTVSGain}{+1.56}
\newcommand{\OVOTVSGain}{+4.56}

\newcommand{\SBAlwaysHistory}{86.49}
\newcommand{\OVOAlwaysHistory}{69.51}
\newcommand{\SBNoCommit}{86.53}
\newcommand{\OVONoCommit}{69.75}

\newcommand{\EffHistoryCallsSB}{1.00}
\newcommand{\EffHistoryCallsOVO}{1.00}
\newcommand{\EffHistoryFramesSB}{29.97}
\newcommand{\EffHistoryFramesOVO}{30.54}
\newcommand{\EffStaticHistoryCallsSB}{1.00}
\newcommand{\EffStaticHistoryFramesSB}{17.00}
\newcommand{\EffAlwaysHistoryCallsSB}{2.00}
\newcommand{\EffAlwaysHistoryCallsOVO}{2.00}
\newcommand{\EffAlwaysHistoryFramesSB}{33.97}
\newcommand{\EffAlwaysHistoryFramesOVO}{34.54}
\newcommand{\EffRecentCallsSB}{1.00}
\newcommand{\EffRecentCallsOVO}{1.00}
\newcommand{\EffRecentFramesSB}{4.00}
\newcommand{\EffRecentFramesOVO}{4.00}
\newcommand{\EffEventOnlyCallsSB}{1.00}
\newcommand{\EffEventOnlyCallsOVO}{1.00}
\newcommand{\EffEventOnlyFramesSB}{19.51}
\newcommand{\EffEventOnlyFramesOVO}{19.79}
\newcommand{\EffMaxConfidenceCallsSB}{3.00}
\newcommand{\EffMaxConfidenceCallsOVO}{3.00}
\newcommand{\EffMaxConfidenceFramesSB}{53.48}
\newcommand{\EffMaxConfidenceFramesOVO}{54.33}
\newcommand{\EffNoAcquisitionCallsSB}{2.12}
\newcommand{\EffNoAcquisitionCallsOVO}{2.31}
\newcommand{\EffNoAcquisitionFramesSB}{27.26}
\newcommand{\EffNoAcquisitionFramesOVO}{33.31}
\newcommand{\EffRandomCallsSB}{1.00}
\newcommand{\EffRandomCallsOVO}{1.00}
\newcommand{\EffRandomFramesSB}{17.58}
\newcommand{\EffRandomFramesOVO}{18.10}
\newcommand{\EffNoCommitCallsSB}{1.60}
\newcommand{\EffNoCommitCallsOVO}{2.16}
\newcommand{\EffNoCommitFramesSB}{17.08}
\newcommand{\EffNoCommitFramesOVO}{30.33}
\newcommand{\EffTVSCallsSB}{1.60}
\newcommand{\EffTVSCallsOVO}{2.16}
\newcommand{\EffTVSFramesSB}{17.08}
\newcommand{\EffTVSFramesOVO}{30.33}
\newcommand{\EffOracleCallsSB}{3.00}
\newcommand{\EffOracleCallsOVO}{3.00}
\newcommand{\EffOracleFramesSB}{53.48}
\newcommand{\EffOracleFramesOVO}{54.33}
\newcommand{\SBFlowRecentKept}{2{,}049}

\newcommand{\SBFlowViewRepaired}{105}

\newcommand{\OVOFlowRecentKept}{1{,}688}

\newcommand{\OVOFlowViewRepaired}{318}

\newcommand{\SBRecentErrors}{380}
\newcommand{\OVORecentErrors}{1{,}206}
\newcommand{\SBRecentCorrect}{2{,}115}
\newcommand{\OVORecentCorrect}{1{,}829}
\newcommand{\SBRecoverable}{166}
\newcommand{\OVORecoverable}{447}
\newcommand{\SBNetCorrected}{39}
\newcommand{\OVONetCorrected}{177}
\newcommand{\SBOracleGapCases}{127}
\newcommand{\OVOOracleGapCases}{270}
\newcommand{\SBTraceNotAcquired}{32}
\newcommand{\OVOTraceNotAcquired}{53}
\newcommand{\SBTraceRejected}{10}
\newcommand{\OVOTraceRejected}{23}
\newcommand{\SBTraceOverridden}{85}
\newcommand{\OVOTraceOverridden}{194}

\newcommand{\OVOSealedRT}{82.20}
\newcommand{\OVOSealedBwd}{64.21}
\newcommand{\OVOSealedFwd}{49.20}

\newcommand{\OVOAgentConfRT}{81.27}
\newcommand{\OVOAgentConfBwd}{69.37}
\newcommand{\OVOAgentConfFwd}{58.67}

\newcommand{\TaskRECConf}{39.8}
\newcommand{\TaskOCRConf}{96.6}
\newcommand{\TaskACRConf}{80.7}
\newcommand{\TaskATRConf}{82.8}
\newcommand{\TaskSTUConf}{67.4}
\newcommand{\TaskFPDConf}{80.2}
\newcommand{\TaskOJRConf}{79.9}
\newcommand{\TaskEPMConf}{68.4}
\newcommand{\TaskASIConf}{68.2}
\newcommand{\TaskHLDConf}{71.5}
\newcommand{\TaskSSRConf}{74.1}
\newcommand{\TaskCRRConf}{62.1}

\newcommand{\SBSFTOP}{86.92}
\newcommand{\SBSFTCR}{84.38}
\newcommand{\SBSFTCS}{94.64}
\newcommand{\SBSFTATP}{93.07}
\newcommand{\SBSFTEU}{81.76}
\newcommand{\SBSFTTR}{92.21}
\newcommand{\SBSFTPR}{78.70}
\newcommand{\SBSFTSU}{81.71}
\newcommand{\SBSFTACP}{87.54}
\newcommand{\SBSFTCT}{44.04}
\newcommand{\SBConfOP}{91.28}
\newcommand{\SBConfCR}{83.59}
\newcommand{\SBConfCS}{93.38}
\newcommand{\SBConfATP}{92.08}
\newcommand{\SBConfEU}{83.02}
\newcommand{\SBConfTR}{95.64}
\newcommand{\SBConfPR}{87.04}
\newcommand{\SBConfSU}{80.49}
\newcommand{\SBConfACP}{84.99}
\newcommand{\SBConfCT}{54.92}

\newcommand{\TaskREC}{21.20}  
\newcommand{\TaskCRR}{51.67}  
\newcommand{\TaskEPM}{57.24}  
\newcommand{\TaskASI}{60.14}  
\newcommand{\TaskSTU}{72.47}  
\newcommand{\TaskSSR}{74.72}  
\newcommand{\TaskFPD}{75.25}  
\newcommand{\TaskHLD}{75.27}  
\newcommand{\TaskACR}{82.57}  
\newcommand{\TaskOJR}{83.15}  
\newcommand{\TaskATR}{84.48}  
\newcommand{\TaskOCR}{95.30}  

\newcommand{\TaskOCRBase}{\OVOInstructOCR}
\newcommand{\TaskATRBase}{\OVOInstructATR}
\newcommand{\TaskOJRBase}{\OVOInstructOJR}
\newcommand{\TaskACRBase}{\OVOInstructACR}
\newcommand{\TaskFPDBase}{\OVOInstructFPD}
\newcommand{\TaskSTUBase}{\OVOInstructSTU}
\newcommand{\TaskHLDBase}{\OVOInstructHLD}
\newcommand{\TaskASIBase}{\OVOInstructASI}
\newcommand{\TaskEPMBase}{\OVOInstructEPM}
\newcommand{\TaskSSRBase}{\OVOInstructSSR}
\newcommand{\TaskCRRBase}{\OVOInstructCRR}
\newcommand{\TaskRECBase}{\OVOInstructREC}
\newcommand{\TaskOCRTVS}{96.64}
\newcommand{\TaskATRTVS}{82.76}
\newcommand{\TaskOJRTVS}{79.89}
\newcommand{\TaskACRTVS}{80.73}
\newcommand{\TaskFPDTVS}{80.20}
\newcommand{\TaskSTUTVS}{67.42}
\newcommand{\TaskHLDTVS}{71.51}
\newcommand{\TaskASITVS}{68.24}
\newcommand{\TaskEPMTVS}{68.35}
\newcommand{\TaskSSRTVS}{74.09}
\newcommand{\TaskCRRTVS}{62.08}
\newcommand{\TaskRECTVS}{39.83}
\newcommand{\TaskOCROracle}{97.32}
\newcommand{\TaskATROracle}{87.93}
\newcommand{\TaskOJROracle}{87.50}
\newcommand{\TaskACROracle}{87.16}
\newcommand{\TaskFPDOracle}{83.17}
\newcommand{\TaskSTUOracle}{78.09}
\newcommand{\TaskHLDOracle}{82.26}
\newcommand{\TaskASIOracle}{79.73}
\newcommand{\TaskEPMOracle}{74.41}
\newcommand{\TaskSSROracle}{82.03}
\newcommand{\TaskCRROracle}{76.25}
\newcommand{\TaskRECOracle}{51.43}

\newcommand{\CaseBeanRecentMargin}{0.834}
\newcommand{\CaseBeanEventMargin}{0.991}

\newcommand{\CaseBeanDecisiveTime}{119.68}

\newcommand{\SBRepGain}{+4.25}        
\newcommand{\OVORepGain}{+6.24}       
\newcommand{\SBTotalGain}{+5.81}      
\newcommand{\OVOTotalGain}{+10.80}    
\newcommand{\SBStrictGenericDelta}{$-0.20$} 
\newcommand{\OVOStrictGenericDelta}{+1.73}  
\newcommand{\StrictGenericBaseDelta}{+4.51} 
\newcommand{\HLDStrictGenericDelta}{+8.06}  
\newcommand{\OVOSuffDrop}{1.69}       
\newcommand{\HLDSuffDrop}{4.30}       
\newcommand{\SuffRegressions}{124}
\newcommand{\SuffRepairs}{67}
\newcommand{\SuffP}{4.5{\times}10^{-5}} 
\newcommand{\SBOracleHeadroom}{5.09}  
\newcommand{\OVOOracleHeadroom}{8.75} 
\newcommand{\SBReplaceShare}{66.9\%}  
\newcommand{\OVOReplaceShare}{71.9\%} 
\newcommand{\RhoViewSB}{43.7\%}       
\newcommand{\RhoViewOVO}{37.1\%}      
\newcommand{\RhoRecoverSB}{63.3\%}    
\newcommand{\RhoRecoverOVO}{71.1\%}   
\newcommand{\RhoPreserveSB}{96.9\%}   
\newcommand{\RhoPreserveOVO}{92.3\%}  
\newcommand{\TrainMinutes}{98}        
\newcommand{\DeltaSSRRep}{+7.8}       
\newcommand{\DeltaCRRRep}{+6.3}       
\newcommand{\DeltaCRRep}{+12.5}       
\newcommand{\DeltaACPRep}{+8.5}       
\newcommand{\DeltaEURep}{+5.7}        
\newcommand{\DeltaPRAccess}{+8.3}     
\newcommand{\DeltaCTAccess}{+10.9}    

\title{Supervising the Visible State: Separating Representation from View Selection in Streaming Video}

\author{Yihao~Hu\textsuperscript{1,\ensuremath{\ast}},
Pan~Wang\textsuperscript{3,\ensuremath{\ast}},
Yitong~Lin\textsuperscript{2,\ensuremath{\ast}},
Zhihao~Wen\textsuperscript{4,\ensuremath{\dagger}},
Deyuan~Liu\textsuperscript{1}, Hang~Wang\textsuperscript{1},
Mingda~WAN\textsuperscript{1}, Yiwen~Zhao\textsuperscript{5},
Yutong~Li\textsuperscript{6}, Yuchen~Mao\textsuperscript{7},
Tao~Lin\textsuperscript{1,\ensuremath{\dagger}}}
\affiliations{%
\textsuperscript{1}Westlake University,\quad
\textsuperscript{2}ZJU,\quad
\textsuperscript{3}USTC,\quad
\textsuperscript{4}SMU,\quad
\textsuperscript{5}FDU,\quad
\textsuperscript{6}ECNU,\quad
\textsuperscript{7}SJTU\\[4pt]
\textsuperscript{\ensuremath{\ast}}Equal contribution,\quad
\textsuperscript{\ensuremath{\dagger}}Corresponding author}

\begin{document}
\etocdepthtag.toc{main}

\raggedbottom
\setlength{\parskip}{6pt}
\setlength{\textfloatsep}{6pt}
\setlength{\dbltextfloatsep}{6pt}
\setlength{\floatsep}{6pt}
\setlength{\dblfloatsep}{6pt}
\setlength{\intextsep}{6pt}
\setlength{\emergencystretch}{2em}

\maketitle
\begin{abstract}
An action seen once and the same action seen three times receive the same
action-recognition label, so that supervision does not require a model
to count. Answers shared across histories that differ
in event order, accumulation, or relations leave those distinctions
unconstrained; we call this \emph{state under-specification}. Visible-State
Supervision (\viss{}) addresses it by training on source-grounded probes of
event identity, order, accumulation, relation change, and answer sufficiency;
under the same recent-frame interface it improves OVO-Bench over a
strict-matched generic temporal supervision baseline while remaining comparable
on StreamingBench, with the largest gain in hallucination detection.
Test-Time Causal View Search (\tvs{}) then adapts evidence access while
keeping the learned model fixed, expanding from recent frames to sparse event and compact
history views on demand. The two interventions separate representation from
view selection: supervision particularly benefits hallucination detection and causal
reasoning, access particularly benefits event recall and forward reasoning. Together
they reach the best reported accuracy among the published systems compared here on
StreamingBench and OVO-Bench at a fraction of the processed frames of exhaustive selection, and strict
replay shows that most of the remaining gap to a three-view oracle is a
correct answer that was later replaced. \textbf{Streaming accuracy depends on
what supervision asks the model to keep, what evidence inference exposes, and
whether the model keeps a correct answer once it has one}.
\par\smallskip
{\small
\textbf{Code:} \url{https://github.com/yihaohu0118/ViSS}\\[2pt]
\textbf{Project Homepage:} \url{https://yihaohu0118.github.io/ViSS/}\\[2pt]
\textbf{Resources:} \href{https://huggingface.co/collections/mis2/viss-6adc8caf041ae0c501c9a11b}{ViSS model and code on Hugging Face}\par}
\end{abstract}

\section{Introduction}
\label{sec:intro}

Picture an assistant that watches through camera glasses. Midway
through a workout it is asked, ``How many push-ups was that?'' Its current
frame shows a person at the top of a push-up; whether that was the fifth or
the fifteenth is not in the frame. In the kitchen, ``Did the garlic go in
before the onions?'' and ``Did I turn the stove off?'' are questions of the
same kind, and if the stove was never in view, the honest answer is that it
cannot tell. This is the streaming setting: at time $T$, a model answers from
the observed prefix $V_{\leq T}$ under bounded context and computation
\citep{chen2024videollmonline,qian2025dispider}, without the complete
recording that offline models
\citep{li2024llavaonevision,chen2024internvl,cheng2024videollama2,zhang2024longva,bai2025qwen25vl}
and their benchmarks
\citep{mangalam2023egoschema,fu2024videomme,zhou2024mlvu} assume. What it can
answer depends on what it has retained about the history, not only on what it
currently sees.

The push-up question already shows what that retention must contain. Two
histories that end with the same movement, one after a single repetition and
one after three, receive the same action label; only a counting question
separates them. The same holds for the order of events and for the object relations
they leave behind. We call the temporal content that a bounded causal view
establishes its \emph{visible state}: which events occurred, in what order,
how often, which relations changed, and whether the evidence suffices to answer at all. \cone{} \emph{Answer-level supervision
under-specifies this state}: an answer shared across temporally different
histories leaves those distinctions unconstrained, so nothing in the target
requires the model to keep count, order, or relations. We call this
supervision-level ambiguity \emph{state under-specification}; temporal
benchmarks show that exactly these distinctions are where video LLMs remain
weak \citep{liu2024tempcompass,cai2024temporalbench,zhang2024vinoground,shangguan2025tomato}.
Related work on sequential state tracking makes intermediate progress explicit
by aligning recurrent predictions with operation-prefix states
\citep{wang2026par}, connecting the supervision target to the distinctions
needed during computation.

\begin{figure}[t]
\centering
\includegraphics[width=\textwidth]{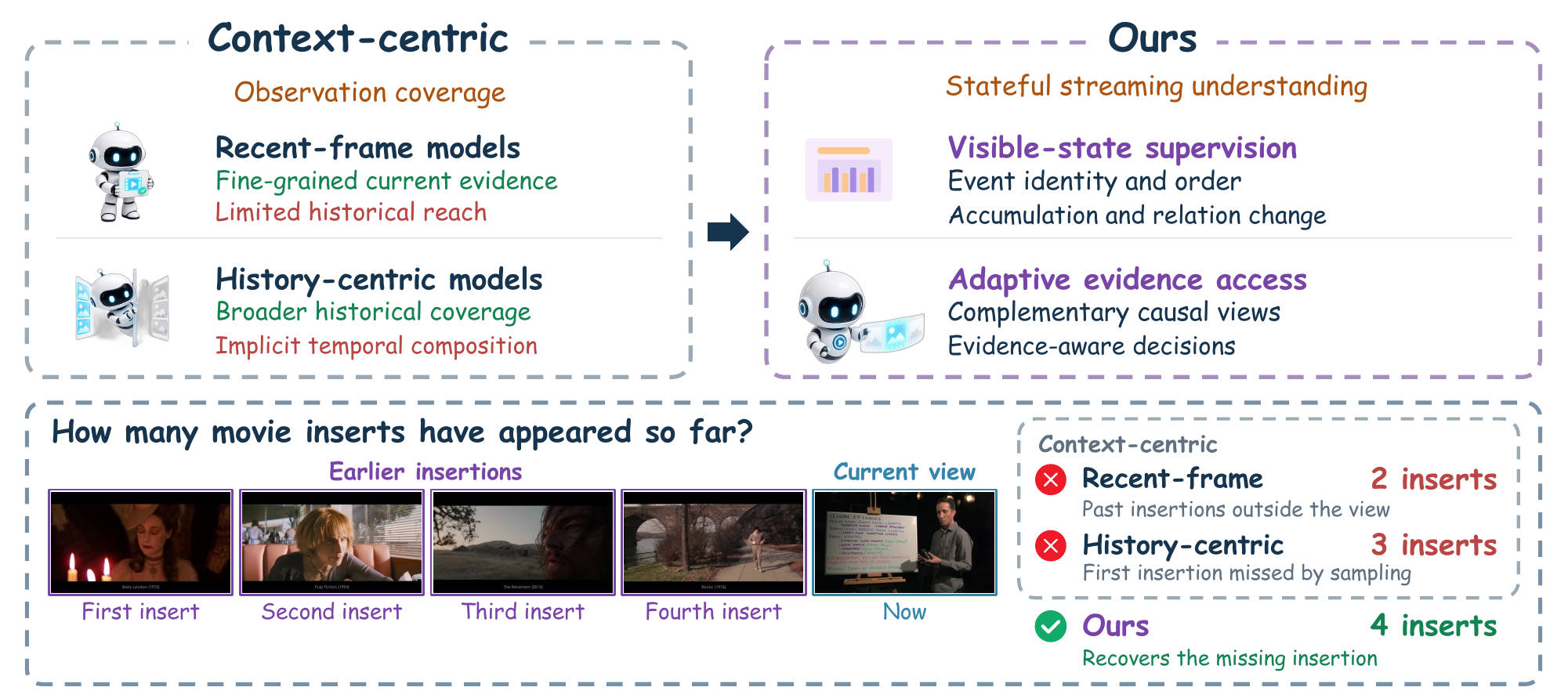}
\caption{\textbf{Context-centric streaming versus supervising the visible
state.} \textbf{Top:} recent-frame models see little history and
history-centric models leave temporal composition implicit; both inherit
answer-level supervision, whereas we supervise the visible state and acquire
causal views on demand. \textbf{Bottom:} counting movie inserts, the recent
view misses earlier ones and historical sampling skips the first; retrieving it restores the count. This example does not separate the two levers.}
\label{fig:teaser}
\end{figure}

Most streaming methods are \emph{context-centric}: they organize the growing
visual context, through token compression
\citep{li2024llamavid,song2024moviechat,shu2025videoxl,yao2025timechatonline},
persistent memories
\citep{he2024malmm,zhang2024flashvstream,wang2025videollamb,zhang2025videochatonline,zeng2025streamforest},
or event-based retrieval
\citep{selectstream2026,folio2026,curvestream2026,wen2026eventmemagent}, and
thereby decide which evidence stays accessible; our focus is how the supervision
formulation shapes what the model can infer from that evidence. SimpleStream~\citep{shen2026simplestream}
shows that an off-the-shelf VLM reading a few recent causal frames already matches far more elaborate systems. Yet \ctwo{} \emph{a representation, however
stateful, can ground only the observations its view exposes}: with four
recent frames, the history that recall and counting questions need is absent
(Figure~\ref{fig:teaser}). Representation and view selection are therefore
two separable levers, and we measure each with a controlled intervention on
one backbone, Qwen3-VL-8B with the latest four frames.

In this paper, we move both levers. \textbf{First}, due to \cone{}, we design
\textbf{Visible-State Supervision} (\viss{}), which trains on probes whose targets require the missing distinctions; the 7{,}677-example mixture is grounded in source annotations or frames and trains in one epoch. Under the same recent-4 interface it improves both
benchmarks, and a strict-matched generic temporal SFT control shows that the
visible-state formulation adds to the OVO-Bench gain from temporal post-training
(Section~\ref{sec:results-data}). Sufficiency is part of this state: removing examples whose answer is that the evidence does not suffice is associated with lower hallucination-detection accuracy.
\textbf{Second}, to address \ctwo{}, we introduce \textbf{Test-Time Causal
View Search} (\tvs{}), which freezes the \viss{} checkpoint and changes only
the view, cascading from the recent view to a sparse event view when the
recent answer is uncertain and to a compact history view when the two
disagree. With the checkpoint frozen it adds further gains on both benchmarks
at a third to a half of the frames of exhaustive three-view selection
(Section~\ref{sec:results-data}).

Together the two levers reach \SBAgentConf{} on StreamingBench and
\OVOAgentConf{} on OVO-Bench, \SBTotalGain{} and \OVOTotalGain{} points over
the same backbone (Figure~\ref{fig:results}); strict replay attributes most
of the remaining gap to a three-view oracle to a correct answer that was held
and then replaced.

Our contributions are:\nopagebreak[4]
\begin{itemize}[topsep=-1pt,itemsep=1.5pt,parsep=0pt,partopsep=0pt]
\item We identify state under-specification in answer-level supervision and
address it with \viss{}, 7{,}677 source-grounded probes of event identity,
order, accumulation, relation change, and sufficiency. At a matched backbone
and budget, it exceeds strict-matched generic temporal SFT by \OVOStrictGenericDelta{} points on
OVO-Bench and \HLDStrictGenericDelta{} points on hallucination detection.
\item We distinguish supervision and evidence access through matched-training
and fixed-checkpoint comparisons. Supervision mainly improves
hallucination detection and causal reasoning; access, through \tvs{}, mainly
improves event recall and forward reasoning at a fraction of the frames of
exhaustive selection.
\item Strict replay attributes \SBReplaceShare{} and \OVOReplaceShare{} of
the remaining oracle gap to replacing an already correct answer, identifying
commitment stability as the main selection limitation within this subset.
\end{itemize}

\section{Related Work}
\label{sec:related}

\stitle{Streaming post-training.}
Prior work adapts multimodal models to causal observation through causal
temporal attention \citep{yan2025streamformer}, large-scale multitask
instructions \citep{xia2025streamo}, supervised visual clues and response
states \citep{videochat32026}, streaming interaction formats
\citep{wang2025mmduet,liu2024streamchat,chen2025livecc}, reasoning that
evolves with the stream \citep{guan2026vst,thinkstream2026} or is reinforced
with rule-based rewards \citep{feng2025videor1}, and self-generated causal
trajectories \citep{wen2026evostreaming}. Counting benchmarks also probe
streaming state maintenance \citep{liu2026svcbench}. In audio-visual reasoning,
OmniVideo-R1 combines query-intensive grounding and modality-attentive fusion
within a reinforcement-learning framework \citep{chen2026omnivideor1}.
We complement these
directions by separating temporal post-training, visible-state formulation,
and evidence access. In this work, we target the unconstrained temporal
distinctions with source-grounded probes under a fixed recent-4 view; the
trade-off is that no architectural state is added, so the model can only
ground what its view exposes.

\stitle{View selection.}
Streaming systems bound what the model sees by compressing visual tokens or
KV caches \citep{li2024llamavid,song2024moviechat,shu2025videoxl,yao2025timechatonline},
maintaining external or recurrent memories
\citep{he2024malmm,qian2024videostreaming,zhang2024flashvstream,wang2025videollamb,zeng2025streamforest},
retrieving cached context \citep{di2025rekv}, organizing history through
event-aware or query-addressable structures
\citep{curvestream2026,selectstream2026,folio2026,wen2026eventmemagent},
selecting frames \citep{tang2025aks,yu2025framevoyager}, or acting as agents
that decide what to inspect
\citep{wang2024videoagent,wang2025videotree,eva2026,yang2025streamagent}.
AtlasVA organizes reusable visual skills through
spatial heatmaps, visual exemplars, and symbolic text skills
\citep{wang2026atlasva}.
SimpleStream \citep{shen2026simplestream} shows that a few recent frames
already form a strong reference and motivates accessing history on demand
(Figure~\ref{fig:teaser}). In this work, \tvs{} makes that direction concrete
as a fixed family of three causal views with calibrated acquisition and
commitment on the frozen \viss{} checkpoint; the trade-off is a
query-independent view family rather than learned retrieval, which lets its
effect be measured separately from supervision.

\section{Problem Formulation}
\label{sec:setup}

\subsection{Causal Streaming Prediction}

For episode $e$, let $V_e=((f_i,\tau_i))_{i=1}^{N_e}$ denote a timestamped
video. A question $q_e$ with target answer $y_e$ is issued at an exogenous
interruption time $T_e$, when the system has observed the causal prefix
$P_e(T_e)=\bigl((f_i,\tau_i):\tau_i\leq T_e\bigr)$. The relevant evidence
may be concentrated near $T_e$, distributed across earlier observations, or
insufficient at the time of the query. A causal view is a budgeted rendering
of the observed prefix, and the model computes
\begin{equation}
x_e=g\!\left(P_e(T_e)\right),\quad
I(x_e)\subseteq\{i:\tau_i\leq T_e\},\qquad
z_e=F_\theta(x_e),\quad
\hat y_e=G_\theta(z_e,q_e),
\label{eq:representation-readout}
\end{equation}
where $I(x_e)$ contains the source-frame indices represented in $x_e$, the
view constructor $g$ may select, resize, and organize observations while
preserving their temporal order, $z_e$ is the visual-temporal representation
induced by the view, and $G_\theta$ is the shared answer readout.

\subsection{State Representation and View Selection}

A useful streaming representation preserves the temporal distinctions that
support many questions about the same observed process. We call this content
the \emph{visible state} of a causal view and evaluate it behaviorally,
through the family of temporal predictions that the shared readout $G_\theta$
supports from $z_e$.

View selection determines which causal observations enter $z_e$. Let $\pi_0$
denote the fixed recent-4 policy. An adaptive policy $\pi$ may sequentially
acquire legal views $X_e^\pi=(x_{e,0},\ldots,x_{e,K_e})$ and select among the
resulting answer proposals, subject to
$\bigl|\bigcup_{k=0}^{K_e} I(x_{e,k})\bigr|\leq B_f$ unique source frames and
$K_e+1\leq B_c$ model calls; every acquired view draws exclusively from
$P_e(T_e)$.

\stitle{Controlled interventions.}
For benchmark score $S(\theta,\pi)$, we measure the two factors through
\begin{equation}
\begin{aligned}
\Delta_{\mathrm{rep}}&=S(\theta^\star,\pi_0)-S(\theta_0,\pi_0),\\
\Delta_{\mathrm{access}}&=S(\theta^\star,\pi_{\mathrm{TVS}})
                         -S(\theta^\star,\pi_0).
\end{aligned}
\label{eq:controlled-interventions}
\end{equation}
The first contrast measures the full post-training gain at fixed evaluation
evidence; the second holds every learned parameter fixed and changes only
the view policy; their sum is the staged gain from the public initialization
to the complete system. Strict replay of all three views on the frozen
checkpoint then localizes availability and selection failures within its
proposal set.

\stitle{Scope of claims.}
Visible state is a behavioral, supervision-level notion: we add no
architectural state and claim no unique internal state. The two contrasts
are staged, not independent, since a view's value depends on the checkpoint
reading it. All controlled results use one 8B backbone and two benchmarks;
\tvs{} thresholds were fixed once on a source-disjoint holdout.

\section{Method}
\label{sec:method}

\viss{} first trains shared parameters on temporally structured targets;
\tvs{} then freezes them and varies the evidence presented at inference
(Figure~\ref{fig:method-overview}). Evaluating the checkpoint under recent-4
access before enabling \tvs{} yields the two contrasts in
Eq.~\ref{eq:controlled-interventions}.

\begin{figure}[t]
\centering
\includegraphics[width=1.00\textwidth]{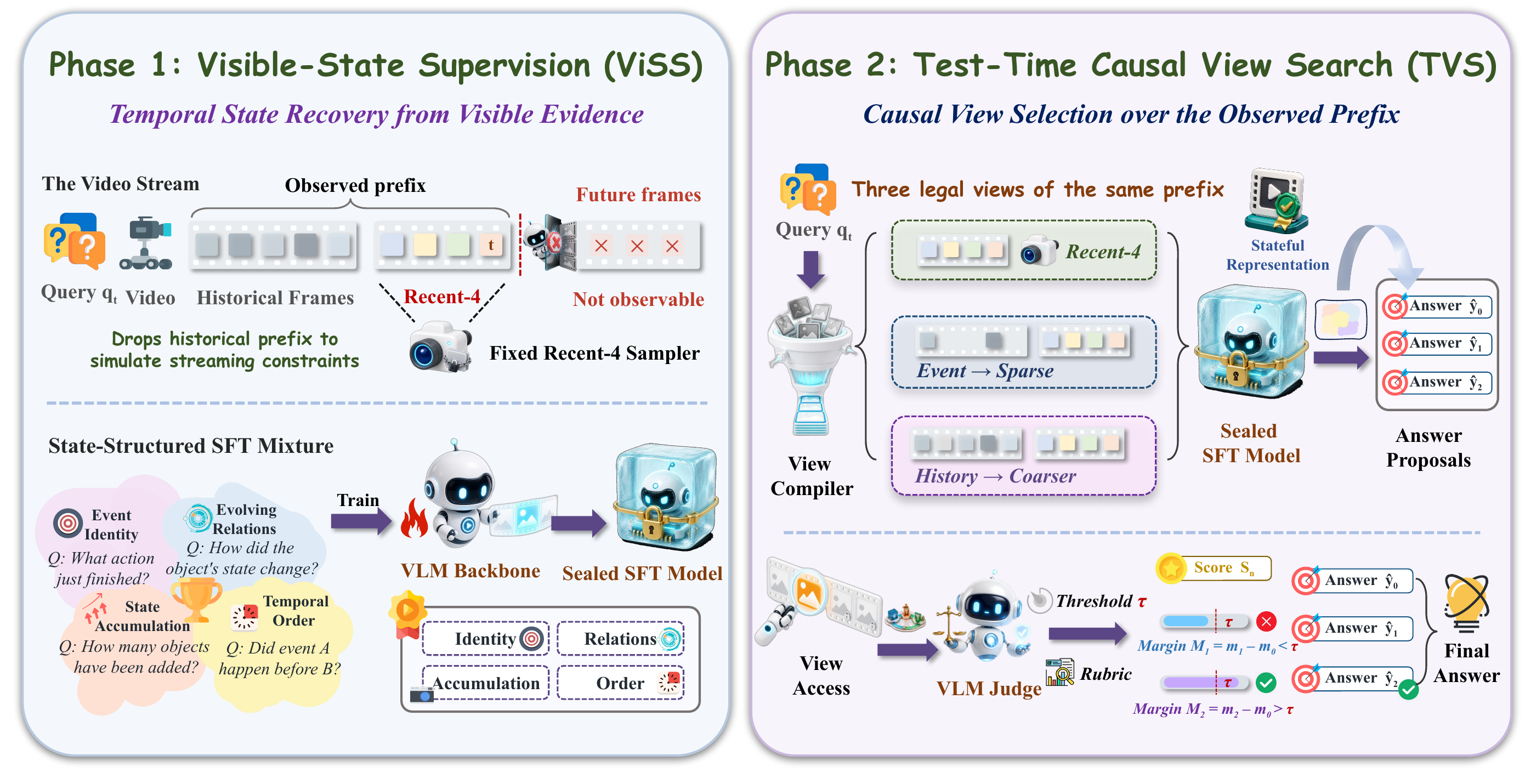}
\caption{\textbf{Overview.} \textbf{Phase 1, \viss{}
(Section~\ref{sec:method-viss}):} probes of event identity,
order, accumulation, and relation change (illustrated), together with sufficiency, train a shared
representation in one epoch; recent-4 is the dominant training input and the
evaluation interface (longer views: Appendix~\ref{app:configs}).
\textbf{Phase 2, \tvs{} (Section~\ref{sec:method-agent}):} the checkpoint is
frozen; the event view is acquired when the recent margin is low, the history
view when the two disagree, and a calibrated margin rule keeps one proposal.}
\label{fig:method-overview}
\end{figure}

\subsection{Visible-State Supervision}
\label{sec:method-viss}

\stitle{Motivation.}
By \cone{}, an answer-level target leaves multiplicity, order, relations, and
sufficiency unconstrained. The remedy is to ask for them directly: when targets are determined by the supplied view, a separating probe requires a loss-minimizing representation to keep the distinction (Proposition~\ref{prop:underspec}, Appendix~\ref{app:theory}), and asking for several jointly makes the requirements complementary
rather than task-specific.

\stitle{Probe families.}
Each probe pairs a causal training view $x_i$ with a question requesting one
temporal readout: identity probes ask which events occurred, order probes
their sequence, accumulation probes the quantities established by the prefix,
relation probes how observed states changed, and sufficiency probes whether
the supplied evidence supports a determinate answer, in the spirit of
unanswerable questions \citep{rajpurkar2018squad2}. Every family uses the
shared computation in Eq.~\ref{eq:representation-readout}, with a direct
option, binary value, or integer as the target. An identity target can be
satisfied without recovering an event's multiplicity or position; counting
and ordering targets make those distinctions necessary, and sufficiency
targets penalize commitment when the evidence is inadequate. Fixed-view
evaluation then tests whether the joint requirement transfers to temporal
decisions under bounded views.

\stitle{Source-grounded construction.}
We construct probes directly from source annotations whenever the required
event, timestamp, or relation is available; for capabilities absent from the
annotations, a teacher proposes candidate question--answer pairs from source
pixels, timestamps, and a family-specific instruction, as in synthetic video
instruction data \citep{maaz2024videochatgpt,chen2024sharegpt4video,zhang2025llavavideo},
behind a construction
firewall that excludes benchmark media, questions, answers, task metadata,
student predictions, and evaluation feedback. Every candidate is grounded in
the supplied causal prefix through source and timestamp checks, and a
\emph{semantic stability} check retains a candidate only when its answer maps
back to the same target under two option permutations $\pi_a$ and $\pi_b$,
$\pi_a^{-1}(\tilde y_i^{\pi_a})=\pi_b^{-1}(\tilde y_i^{\pi_b})=y_i$; the
auditor is not given $y_i$, and disagreeing records are removed rather than
relabeled. Source-membership, duplicate-question, and overlap checks complete
admission. These checks establish label stability and causal grounding, not
that every question needs history (Appendix~\ref{app:data-construction}).
The resulting 7{,}677-example mixture combines 3{,}594 source-aligned general
video examples, 2{,}172 repetition-counting probes built on RepCount
\citep{hu2022transrac}, 1{,}012 state probes over
longer prefixes, 116 sufficiency examples whose answer is that the evidence is
insufficient, and 783 teacher-proposed probes of backward state, action
sequence, and counting.

\stitle{Objective.}
Each admitted example pairs a causal view and probe with a direct target from
its answer space and supervision family. We optimize assistant-token-only
masked likelihood,
\begin{equation}
\mathcal L_{\mathrm{ViSS}}(\theta)
=
-\sum_i \alpha_i
\sum_{j\in\mathcal A_i}
\log p_\theta
\left(
y_{i,j}\mid x_i,q_i,y_{i,<j}
\right),
\label{eq:viss-objective}
\end{equation}
where $\mathcal A_i$ is the assistant-target mask and $\alpha_i$ the fixed
loss weight of example $i$; the target contains only the direct answer, with
source metadata, teacher confidence, rationales, and hidden environment state
excluded. A fresh run from the public initialization $\theta_0$ produces
$\theta^\star$ (configuration in Section~\ref{sec:exp-protocol} and
Appendix~\ref{app:configs}).

\subsection{Test-Time Causal View Search}
\label{sec:method-agent}

\stitle{Motivation.}
By \ctwo{}, even the \viss{} representation can ground only what its view
exposes, and recent-4 omits the history that recall and counting need.
Exposing the full history at every query is costly and often unnecessary, so
inference compute should go where it helps \citep{snell2025scaling}, and the
model's own confidence signals when more evidence is needed
\citep{kadavath2022know,kamath2020selective}. \tvs{} therefore expands the
view on demand with the checkpoint frozen.

\stitle{Causal views.}
Let $g_k$ denote a question-independent operator that renders the prefix as
$x_{e,k}=g_k(P_e(T_e))$; each view is processed by the same representation
and readout in Eq.~\ref{eq:representation-readout} with $\theta^\star$
frozen. The recent view $g_0$ keeps the latest four frames at full detail;
the event view $g_1$ adds up to 16 earlier frames ranked by visual change;
the compact-history view $g_2$ exposes a broader portion of the prefix at
reduced resolution while keeping high-resolution recent evidence. All three
share the causal endpoint $T_e$, preserve chronological order, and are
independent of the question, options, and task type; each call exposes at
most 32 frames, and a three-call path processes at most 56 frames including
repeats (Appendix~\ref{app:tvs}).

\stitle{Acquisition.}
\emph{What if the recent view is not enough?} Each call returns an answer
$\hat y_{e,k}$ and a confidence margin $m_{e,k}$, the difference between the
two largest full-vocabulary probabilities at the first answer position.
\tvs{} starts from the recent view and expands only when the current
prediction warrants it: a recent-view margin below $\tau_a$ activates the
event view, and disagreement between the recent and event proposals
activates compact history.

\stitle{Commitment.}
\emph{What if a later view is wrong?} Acquiring evidence and revising the
answer are separate decisions, and unlike voting over sampled answers
\citep{wang2023selfconsistency}, proposals are compared by margin across
views. A disagreeing proposal $k$ replaces the
retained proposal $j$ only when $m_{e,k}-m_{e,j}>\tau_c$; because the
calibrated $\tau_c$ is negative, an accepted proposal need not have a higher
raw margin than the retained answer. Both thresholds are calibrated once on a
source-video-disjoint holdout and fixed across benchmarks; exact rules, pseudocode, causality audits, and frame accounting are in Appendix~\ref{app:tvs}; Appendix~\ref{app:theory} analyzes the gate and the sign of $\tau_c$.

\section{Experiments}
\label{sec:experiments}

\begin{table}[t]
\caption{Full-split OVO-Bench accuracy by official task and group.
Overall is the official task-then-group macro; task names are listed in
Appendix~\ref{app:tasks}. Published rows keep their own protocols
(Appendix~\ref{app:protocols}) and serve as context; the Ours rows share one
backbone, scorer, and recent-4 starting view and carry the paper's claims.
Pale blue highlights the learned representation; darker blue denotes the
complete adaptive system. For \tvs{}, Frames is the per-call maximum.}
\label{tab:ovo-compare}
\centering
\scriptsize
\setlength{\tabcolsep}{1.3pt}
\renewcommand{\arraystretch}{1.0}
\resizebox{\linewidth}{!}{%
\begin{tabular}{@{}lc *{7}{c} *{4}{c} *{4}{c} c@{}}
\toprule
& & \multicolumn{7}{c}{Real-Time Visual Perception}
  & \multicolumn{4}{c}{Backward Tracing}
  & \multicolumn{4}{c}{Forward Active Responding} & \\
\cmidrule(lr){3-9}\cmidrule(lr){10-13}\cmidrule(lr){14-17}
Model & Frames
  & OCR & ACR & ATR & STU & FPD & OJR & Avg.
  & EPM & ASI & HLD & Avg.
  & REC & SSR & CRR & Avg.
  & Overall \\
\midrule
Human & --
  & 93.96 & 92.57 & 94.83 & 92.70 & 91.09 & 94.02 & 93.20
  & 92.59 & 93.02 & 91.37 & 92.33
  & 95.48 & 89.67 & 93.56 & 92.90
  & 92.81 \\
\midrule
\tabsection{18}{Proprietary MLLMs}
Gemini~1.5 Pro & 1\,fps
  & 85.91 & 66.97 & 79.31 & 58.43 & 63.37 & 61.96 & 69.32
  & 58.59 & 76.35 & 52.69 & 62.54
  & 35.53 & 74.24 & 61.67 & 57.15
  & 63.00 \\
GPT-4o & 64
  & 69.80 & 64.22 & 71.55 & 51.12 & 70.30 & 59.78 & 64.46
  & 57.91 & 75.68 & 48.66 & 60.75
  & 27.58 & 73.21 & 59.40 & 53.40
  & 59.54 \\
\midrule
\tabsection{18}{Open-source offline MLLMs}
Qwen2-VL-7B & 64
  & 60.40 & 50.46 & 56.03 & 47.19 & 66.34 & 55.43 & 55.98
  & 47.81 & 35.48 & 56.08 & 46.46
  & 31.66 & 65.82 & 48.75 & 48.74
  & 50.39 \\
InternVL-V2-8B & 64
  & 67.11 & 60.55 & 63.79 & 46.07 & 68.32 & 56.52 & 60.39
  & 48.15 & 57.43 & 24.73 & 43.44
  & 26.50 & 59.14 & 54.14 & 46.60
  & 50.15 \\
Qwen3-VL-8B & 64
  & 75.17 & 58.72 & 72.41 & 57.30 & 70.30 & 59.24 & 65.52
  & 56.57 & 69.59 & 12.37 & 46.18
  & 47.13 & 67.57 & 52.50 & 55.73
  & 55.81 \\
\midrule
\tabsection{18}{Open-source online MLLMs}
StreamForest-7B & 1\,fps
  & 68.46 & 53.21 & 71.55 & 47.75 & 65.35 & 60.87 & 61.20
  & 58.92 & 64.86 & 32.26 & 52.02
  & 32.81 & 70.59 & 57.08 & 53.49
  & 55.57 \\
\midrule
\tabsection{18}{Online video agents}
EventMemAgent & $\le$32
  & 75.84 & 69.72 & 73.28 & 55.62 & 67.33 & 67.93 & 68.29
  & 59.60 & 70.95 & 43.55 & 58.03
  & 33.67 & 72.02 & 62.08 & 55.92
  & 60.75 \\
\midrule
\tabsection{18}{Ours}
Qwen3-VL-8B-Instruct & 4
  & \OVOInstructOCR & \OVOInstructACR & \OVOInstructATR
  & \OVOInstructSTU & \OVOInstructFPD & \OVOInstructOJR
  & \OVOInstructRT
  & \OVOInstructEPM & \OVOInstructASI & \OVOInstructHLD
  & \OVOInstructBwd
  & \OVOInstructREC & \OVOInstructSSR & \OVOInstructCRR
  & \OVOInstructFwd
  & \OVOInstruct \\
\vissrow
\viss{} (recent-4) & 4
  & \TaskOCR & \TaskACR & \TaskATR & \TaskSTU & \TaskFPD & \TaskOJR & \OVOSealedRT
  & \TaskEPM & \TaskASI & \TaskHLD & \OVOSealedBwd
  & \TaskREC & \TaskSSR & \TaskCRR & \OVOSealedFwd
  & \OVOSealed \\
\tvsrow
\textbf{\viss{} + \tvs{}} & $\boldsymbol{\leq 32}$
  & \best{\TaskOCRConf} & \best{\TaskACRConf} & \best{\TaskATRConf}
  & \best{\TaskSTUConf} & \best{\TaskFPDConf} & \best{\TaskOJRConf}
  & \best{\OVOAgentConfRT}
  & \best{\TaskEPMConf} & \best{\TaskASIConf} & \best{\TaskHLDConf}
  & \best{\OVOAgentConfBwd}
  & \best{\TaskRECConf} & \best{\TaskSSRConf} & \best{\TaskCRRConf}
  & \best{\OVOAgentConfFwd}
  & \best{\OVOAgentConf} \\
\bottomrule
\end{tabular}}
\end{table}

We organize the analysis around four questions. \RQ{1}: does supervising visible-state distinctions help beyond generic temporal post-training at the same budget, and where is the additional gain concentrated? \RQ{2}: what does view selection add on a frozen checkpoint, and at what frame cost (both in Sections~\ref{sec:results-main} and~\ref{sec:results-data})?
\RQ{3}: do supervision and access improve the same tasks
(Section~\ref{sec:results-main})? \RQ{4}: when \tvs{} still fails, is the
correct answer unavailable or lost (Section~\ref{sec:results-controllers})?

\subsection{Experimental Setup}
\label{sec:exp-protocol}

We evaluate on \sbrt{}~\citep{lin2024streamingbench} (2{,}495 queries) and \ovob{} (3{,}035 queries) using
the full splits, the official scorers, and the same recent-4 starting view;
StreamingBench uses letter-exact accuracy, and OVO-Bench averages task
accuracies within its real-time, backward, and forward groups, then the three
groups. Published systems in Tables~\ref{tab:ovo-compare}
and~\ref{tab:sb-compare} keep their own protocols
(Appendix~\ref{app:protocols}); our claims rest on the controlled rows, which
share one backbone, scorer, and starting view. Within them, representation
comparisons vary the checkpoint under fixed recent-4 access, and access
comparisons freeze the \viss{} checkpoint and vary only the view policy; no
policy uses benchmark task labels or task-specific routing. The backbone is
Qwen3-VL-8B-Instruct~\citep{qwen3vl2026}: \viss{} is a fresh one-epoch SFT run with a frozen
vision tower, taking \TrainMinutes{} minutes on two H100 GPUs, and \tvs{}
starts from the same recent-4 input, makes at most three calls, and exposes at
most 32 causal frames per call, with thresholds fixed on a source-disjoint
holdout. Data construction and full configurations are in
Appendices~\ref{app:data-construction} and~\ref{app:configs}.

\begin{table}[t]
\caption{Full-split StreamingBench real-time accuracy by official task.
Column abbreviations follow the benchmark taxonomy. Published rows keep their
own protocols and serve as context; the Ours rows share one backbone, scorer,
and recent-4 starting view. Pale blue highlights the learned representation;
darker blue denotes the complete adaptive system. For \tvs{}, Frames is the
per-call maximum, not the union across calls.}
\label{tab:sb-compare}
\centering
\scriptsize
\setlength{\tabcolsep}{2.0pt}
\renewcommand{\arraystretch}{1.0}
\resizebox{\linewidth}{!}{%
\begin{tabular}{@{}lc *{10}{c} c@{}}
\toprule
Model & Frames
  & OP & CR & CS & ATP & EU & TR & PR & SU & ACP & CT
  & ALL \\
\midrule
Human & --
  & 89.47 & 92.00 & 93.60 & 91.47 & 95.65 & 92.52 & 88.00 & 88.75 & 89.74 & 91.30
  & 91.46 \\
\midrule
\tabsection{13}{Proprietary MLLMs}
Gemini~1.5 Pro & 1\,fps
  & 79.02 & 80.47 & 83.54 & 79.67 & 80.00 & 84.74 & 77.78 & 64.23 & 71.95 & 48.70
  & 75.69 \\
GPT-4o & 64
  & 77.11 & 80.47 & 83.91 & 76.47 & 70.19 & 83.80 & 66.67 & 62.19 & 69.12 & 49.22
  & 73.28 \\
Claude~3.5 Sonnet & 20
  & 80.49 & 77.34 & 82.02 & 81.73 & 72.33 & 75.39 & 61.11 & 61.79 & 69.32 & 43.09
  & 72.44 \\
\midrule
\tabsection{13}{Open-source offline MLLMs}
InternVL-V2-8B & 16
  & 68.12 & 60.94 & 69.40 & 77.12 & 67.70 & 62.93 & 59.26 & 53.25 & 54.96 & 56.48
  & 63.72 \\
Qwen2-VL-7B & 0.2--1\,fps
  & 75.20 & 77.45 & 73.19 & 82.81 & 68.32 & 71.03 & 72.22 & 61.19 & 61.47 & 46.11
  & 69.04 \\
Qwen3-VL-8B & 0.2--1\,fps
  & 73.57 & 71.09 & 76.34 & 80.39 & 65.22 & 74.77 & 78.70 & 67.89 & 61.47 & 47.67
  & 70.20 \\
\midrule
\tabsection{13}{Open-source online MLLMs}
StreamForest-7B & 1\,fps
  & 83.11 & 82.81 & 82.65 & 84.26 & 77.50 & 78.19 & 76.85 & 69.11 & 75.64 & 54.40
  & 77.26 \\
\midrule
\tabsection{13}{Online video agents}
EventMemAgent & $\le$32
  & 83.92 & 76.56 & 87.70 & 85.29 & 77.02 & 79.44 & 77.78 & 68.29 & 72.24 & 48.70
  & 77.00 \\
\midrule
\tabsection{13}{Ours}
Qwen3-VL-8B-Instruct & 4
  & \SBBaselineOP & \SBBaselineCR & \SBBaselineCS & \SBBaselineATP
  & \SBBaselineEU & \SBBaselineTR & \SBBaselinePR & \SBBaselineSU
  & \SBBaselineACP & \SBBaselineCT
  & \SBInstruct \\
\vissrow
\viss{} (recent-4) & 4
  & \SBSFTOP & \SBSFTCR & \SBSFTCS & \SBSFTATP & \SBSFTEU
  & \SBSFTTR & \SBSFTPR & \SBSFTSU & \SBSFTACP & \SBSFTCT
  & \SBSealed \\
\tvsrow
\textbf{\viss{} + \tvs{}} & $\boldsymbol{\leq 32}$
  & \best{\SBConfOP} & \best{\SBConfCR} & \best{\SBConfCS}
  & \best{\SBConfATP} & \best{\SBConfEU} & \best{\SBConfTR}
  & \best{\SBConfPR} & \best{\SBConfSU} & \best{\SBConfACP}
  & \best{\SBConfCT}
  & \best{\SBAgentConf} \\
\bottomrule
\end{tabular}}
\end{table}

\begin{figure}[!t]
\centering
\includegraphics[width=0.95\textwidth]{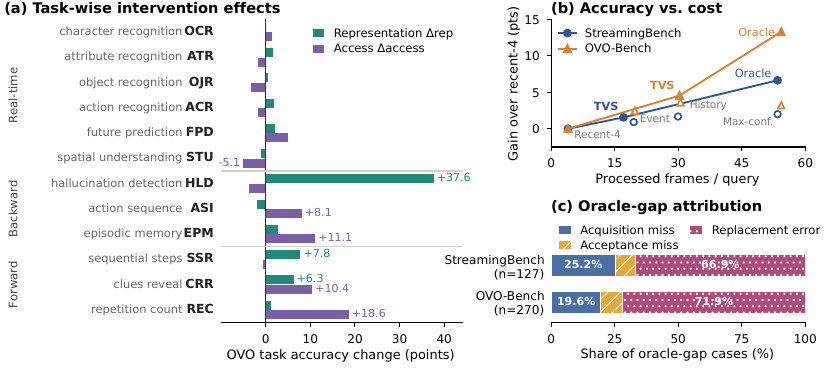}
\caption{What supervision and view selection each change, on one Qwen3-VL-8B
checkpoint. (a) OVO-Bench task accuracy change from \viss{} under fixed
recent-4 evidence ($\Delta_{\mathrm{rep}}$) and from \tvs{} with the
checkpoint frozen ($\Delta_{\mathrm{access}}$). (b) Gain over recent-4 versus
processed frames per query (filled: recent-4, \tvs{}, oracle; open: fixed
views, max-confidence). (c) Selection-error attribution conditioned on
oracle coverage: mostly a correct answer that was held and then replaced.}
\label{fig:results}
\end{figure}

\subsection{Comparison Experiments}
\label{sec:results}
\label{sec:results-main}

The recent-4 baseline is already strong: Qwen3-VL-8B with four recent frames
scores \SBInstruct{} and \OVOInstruct{}, above most published systems in
Tables~\ref{tab:ovo-compare} and~\ref{tab:sb-compare}, including ones that
maintain persistent memories or retrieve events \citep{shen2026simplestream},
so every gain below is measured against a competitive starting point on the
same backbone. Both interventions then add on both benchmarks: the complete \viss{} post-training recipe
raises the recent-4 scores by \SBRepGain{} and \OVORepGain{} after one epoch
on 7{,}677 examples, and \tvs{} adds \SBTVSGain{} and \OVOTVSGain{} with the
checkpoint frozen and at most three calls per query. \textbf{The complete
system reaches \SBAgentConf{} and \OVOAgentConf{}, \SBTotalGain{} and
\OVOTotalGain{} points above the same backbone. These are the highest reported scores
among the listed systems under their respective protocols, and the two
interventions improve different tasks}.

\stitle{OVO-Bench.}
Table~\ref{tab:ovo-compare} and
Figure~\ref{fig:results}(a) show that supervision and
access improve different tasks. Under the same recent-4 interface, the full post-training recipe
adds \OVORepGain{} points, and most of the Backward gain is hallucination
detection, a known weakness of vision-language models
\citep{guan2024hallusionbench}: HLD doubles from \OVOInstructHLD{} to \TaskHLD{}, whereas EPM moves
from \OVOInstructEPM{} to \TaskEPM{} and ASI from \OVOInstructASI{} to
\TaskASI{}. Against strict-matched generic supervision, HLD rises from
\StrictGenericHLD{} to \TaskHLD{}; transfer across the other temporal tasks is
smaller and heterogeneous. Knowing that the visible evidence does not answer the question is
part of the visible state, and the sufficiency-removal comparison in
Section~\ref{sec:results-data} shows a concentrated decline in hallucination detection. Forward
reasoning also improves under fixed views (SSR \DeltaSSRRep{}, CRR
\DeltaCRRRep{}). Freezing the checkpoint and enabling \tvs{} adds another
\OVOTVSGain{} points with the opposite profile: EPM rises to \TaskEPMTVS{}
and ASI to \TaskASITVS{}, REC nearly doubles from \TaskREC{} to
\TaskRECTVS{}, and CRR rises to \TaskCRRTVS{}, while HLD falls to \TaskHLDTVS{} and the real-time average slips from \OVOSealedRT{} to \OVOAgentConfRT{}. The slip is consistent with the imperfect preservation of correct recent answers measured in Section~\ref{sec:results-controllers}. In short, broader evidence buys recall and forward
reasoning at a small cost in hallucination detection and real-time perception
(\RQ{3}).

\stitle{StreamingBench.}
Table~\ref{tab:sb-compare} shows the same division. \viss{} adds
\SBRepGain{} points under recent-4 access, concentrated in causal reasoning
(CR \DeltaCRRep{}), action perception (ACP \DeltaACPRep{}), and event
understanding (EU \DeltaEURep{}), tasks that ask what the visible frames
establish; \tvs{} adds \SBTVSGain{} points, mainly in prospective reasoning
(PR \DeltaPRAccess{}) and counting (CT \DeltaCTAccess{}), tasks that depend
on frames the recent view no longer holds. The same split on a second benchmark shows that the two levers repair
different failures, and together they reach \SBAgentConf{}, the highest among the compared
systems in Table~\ref{tab:sb-compare}.
Strict-matched generic and \viss{} remain comparable on StreamingBench
(\SBStrictGenericMean{} versus \SBViSSMean{} across paired seeds).

\subsection{Diagnostic Analysis}
\label{sec:results-data}

\stitle{Supervision under strict matching.}
Table~\ref{tab:factorization-analysis}(a) asks whether the target structure
matters beyond additional temporal post-training. The control matches the
7{,}677 training instances, visual inputs, weighting, ordering, optimization,
initialization, and seed, comparing the question--target formulation. Generic
temporal post-training gains \StrictGenericBaseDelta{} OVO points over the base, and
\viss{} adds \OVOStrictGenericDelta{} while changing StreamingBench by
\SBStrictGenericDelta{}. Across three paired seeds the OVO gap is always
positive (mean \OVOPairedSeedGap{}); paired bootstrap of this mean gives a 95\% CI
of [\OVOPairedBootstrapLow{}, \OVOPairedBootstrapHigh{}]. The additional gain
is concentrated in HLD (Appendix~\ref{app:representation-control}; \RQ{1}).

Family removals assess component sensitivity. Removing the 116 sufficiency examples reduces OVO-Bench by \OVOSuffDrop{} and HLD by \HLDSuffDrop{}
points, with \SuffRegressions{} paired regressions against
\SuffRepairs{} repairs (McNemar $p=\SuffP{}$) and little change on
StreamingBench. Removal also changes training mass, so this is not an isolated semantic effect.
The repetition-state removal is retained only as a component-sensitivity
result (Appendix~\ref{app:representation-ablations}).

\begin{table}[!htbp]
\caption{Controlled analysis of state supervision and view selection.
(a) fixes recent-4 access; (b) fixes the \viss{} checkpoint.}
\label{tab:factorization-analysis}
\label{tab:access-ablation}
\centering
\small
\setlength{\tabcolsep}{4pt}
\renewcommand{\arraystretch}{1.12}

\begin{minipage}[t]{0.485\linewidth}
\vspace{0pt}
\centering
\textbf{(a) State supervision}
\par\vspace{4pt}
\begin{tabularx}{\linewidth}
{@{}>{\raggedright\arraybackslash}Xrr@{}}
\toprule
Supervision & SB & OVO \\
\midrule
Base
& \SBInstruct & \OVOInstruct \\
Strict-matched generic
& \SBStrictGeneric & \OVOStrictGeneric \\
\viss{} w/o sufficiency
& \SBNoSuff & \OVONoSuff \\
\rowcolor{AntPale}
\textbf{\viss{}}
& \SBSealed & \OVOSealed \\
\bottomrule
\end{tabularx}
\end{minipage}
\hfill
\begin{minipage}[t]{0.485\linewidth}
\vspace{0pt}
\centering
\textbf{(b) View selection}
\par\vspace{4pt}
\begin{tabularx}{\linewidth}
{@{}>{\raggedright\arraybackslash}Xrr@{}}
\toprule
View policy & SB & OVO \\
\midrule
Recent-4
& \SBSealed & \OVOSealed \\
Static history-17
& \SBStaticHistorySeventeen & -- \\
Full history
& \SBHistoryOnly & \OVOHistoryOnly \\
\rowcolor{AntPale}
\textbf{\tvs{}}
& \SBAgentConf & \OVOAgentConf \\
\bottomrule
\end{tabularx}
\end{minipage}

\end{table}

\stitle{Access at a fixed representation.}
Table~\ref{tab:access-ablation}(b) holds the \viss{} checkpoint fixed. Every
wider view beats recent-4, and a single history call already reaches
\SBHistoryOnly{} and \OVOHistoryOnly{}. What \tvs{} adds is adaptive evidence
access under a visual-frame budget. On StreamingBench it
nearly matches the history call (\SBAgentConf{} versus \SBHistoryOnly{}) with 43\%
fewer processed frames (\EffTVSFramesSB{} versus \EffHistoryFramesSB{}) and
beats static 17-frame sampling at a similar average frame budget by 0.64 points
(\SBAgentConf{} versus \SBStaticHistorySeventeen{}); on
OVO-Bench it exceeds both the history call and exhaustive max-confidence selection, which runs all three views and keeps the largest-margin answer (\OVOAgentConf{} versus \OVOHistoryOnly{} and
\OVOMaxConfidence{}) at \EffTVSFramesOVO{} frames against
\EffMaxConfidenceFramesOVO{} (Figure~\ref{fig:results}(b)). The acquisition gate is
what saves the frames: removing it raises processed frames to
\EffNoAcquisitionFramesSB{} and \EffNoAcquisitionFramesOVO{} at essentially
unchanged accuracy. The calibrated commit margin is an answer-selection heuristic with no consistent
aggregate gain: accepting every disagreeing proposal gives \SBNoCommit{} and
\OVONoCommit{} (Appendix~\ref{app:access-costs}; \RQ{2}).

\begin{figure}[!htbp]
\centering
\includegraphics[width=0.97\textwidth]{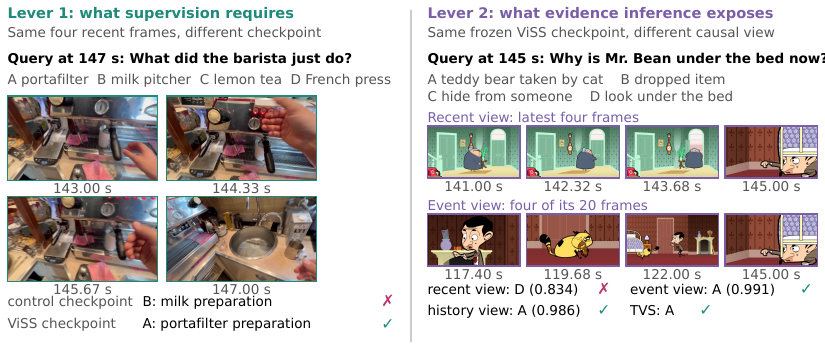}
\caption{One query per lever. Left: same four recent frames, two checkpoints
(control checkpoint: B, \viss{}: A). Right: one frozen \viss{} checkpoint, three
causal views; margins in parentheses, full trace in Figure~\ref{fig:qualitative-case-study}.}
\label{fig:case-levers}
\end{figure}

\subsection{Why Correct Alternatives Can Still Be Lost}
\label{sec:results-controllers}

Strict replay recomputes all three view proposals for every query on the frozen checkpoint, for diagnosis only, and separates two kinds of remaining error. A correct alternative-view proposal exists for
\RhoViewSB{} of recent-view errors on StreamingBench and \RhoViewOVO{} on
OVO-Bench; the rest are unresolved by every view (Appendix~\ref{app:all-view-unresolved-case} shows one whose ambiguity is what to count). When a correct proposal is available, \tvs{}
returns it in \RhoRecoverSB{} and \RhoRecoverOVO{} of cases, and it preserves
an initially correct recent answer in \RhoPreserveSB{} and \RhoPreserveOVO{}.

The three-view oracle, which uses the reference answer to pick a correct proposal whenever any view produces one, reaches \SBOracleThree{} and \OVOOracleThree{},
\SBOracleHeadroom{} and \OVOOracleHeadroom{} points above \tvs{}. Trace-level
attribution over the \SBOracleGapCases{} and \OVOOracleGapCases{} gap queries
assigns \SBReplaceShare{} and \OVOReplaceShare{} of them to replacement
errors, in which a correct proposal was retained and then replaced by a later
view (Figure~\ref{fig:results}(c)); acquisition misses
account for most of the remainder, and outright rejection of a correct
proposal is rare. Within these oracle-gap cases,
\textbf{retaining an available correct answer is the dominant selection limitation}.
This diagnosis is conditional on proposal coverage; it does not account for
all-view unresolved errors or establish a benefit of the current margin rule
(\RQ{4}; Appendices~\ref{app:error-analysis} and~\ref{app:audits}).

Related work studies selective correction through diagnosis-guided recovery
\citep{wang2026diagnosis} and the reliability of retaining intermediate
results across reasoning attempts \citep{liu2026intermediatereuse}.
Our replay analysis locates the corresponding selection problem within a
fixed visual proposal set: acquiring a correct alternative and preserving
it through subsequent revisions are distinct requirements.

\stitle{One query per lever.}
Figure~\ref{fig:case-levers} shows one query per lever. Left: on the same four recent frames, asked what the barista just did, the control checkpoint answers B (milk pitcher) and \viss{} answers A (lock the portafilter), the step the frames show: the difference lies in what the representation reads from one view (lever 1). Right: asked why Mr.\ Bean
is under the bed, the frozen checkpoint answers D from the recent view with
margin \CaseBeanRecentMargin{}, below the acquisition threshold; the event
view reaches back to the cat taking the teddy bear at
\CaseBeanDecisiveTime{}\,s and answers A with margin \CaseBeanEventMargin{},
the history view agrees, and \tvs{} commits to A (lever 2).

\section{Conclusion}
\label{sec:discussion}
\label{sec:conclusion}

Streaming accuracy depends on three separable factors: what the supervision
asks the model to preserve, what evidence inference exposes, and whether the
model keeps a correct answer once it has one. Visible-state supervision adds
to matched temporal post-training, particularly in hallucination detection;
causal view search addresses the second, above all event recall and forward reasoning. Within the gap to the current three-view oracle, replacing a retained correct answer is the dominant selection error, motivating more stable answer selection.

The controlled study uses one 8B backbone and two benchmarks; cross-scale
and cross-family transfer remain open. Gains vary across tasks: supervision
particularly benefits state fidelity, while adaptive access improves recall
and forward reasoning at a cost on some recent-state decisions.


\clearpage
\setcounter{topnumber}{2}
\setcounter{totalnumber}{3}
\subsection*{AI use statement}

Generative AI was used in two roles.  During data construction, a multimodal
teacher proposed candidate question--answer pairs and linguistic variants from
source video frames, timestamps, and generation instructions.  It had no
access to evaluation questions, reference answers, task metadata, or student
predictions; its outputs entered training only after deterministic semantic,
causal, source-membership, and overlap checks.  These checks do not certify
that every target requires historical evidence. AI assistants also supported
code editing, literature search, and manuscript revision.  The authors
designed the research, verified all code, citations, claims, and reported
numbers against source papers and versioned result records, and take responsibility
for the complete work.

\subsection*{Ethics statement}

This work uses existing research video datasets and public benchmarks; it
recruits no human participants and collects no new video.  Videos may contain
identifiable people or contextual information, so any release follows the
source licenses and does not redistribute benchmark media.  Benchmark examples
and labels are excluded from training, and benchmark identity and task type are
not available to the data compiler or \tvs{} policy.  Continuous video
understanding can nevertheless enable surveillance and inherit social or
demographic biases that we have not audited.  We therefore do not claim
suitability for surveillance or other high-stakes deployment.

\subsection*{Reproducibility statement}

Appendices~\ref{app:data-construction} and~\ref{app:configs} specify the data
pipeline, 7{,}677-example training mixture, model initialization, optimization,
and evaluation protocols.  Every retained SFT comparison is a fresh run from
the same public checkpoint; \tvs{} keeps the \viss{} checkpoint and its
holdout-calibrated thresholds fixed. Model weights, processors, source code,
and example configurations are available through the
\href{https://huggingface.co/collections/mis2/viss-6adc8caf041ae0c501c9a11b}{ViSS resource collection}.
Training annotations, raw media, and internal experiment records are not
included in this release. The view compiler excludes
question and answer content; timestamp and strict-replay checks are scoped
in Appendix~\ref{app:audits}. All reported values are defined once in
\texttt{numbers.tex} and traced to versioned result records.

\clearpage
\bibliography{references}
\bibliographystyle{plainnat}

\clearpage
\appendix
\etocdepthtag.toc{appendix}
\begingroup
  \setlength{\parskip}{0pt}
  \hypersetup{linktoc=all}
  \etocsettagdepth{main}{none}
  \etocsettagdepth{appendix}{subsection}
  \etocsetnexttocdepth{subsection}
  \etocsettocstyle{%
    \section*{Appendix Contents}%
    \pdfbookmark[0]{Appendix Contents}{appendix.contents}%
  }{}
  \tableofcontents
  \clearpage
\endgroup

\section{Causal Views and Adaptive View Selection}
\label{app:tvs}

This appendix specifies how complementary observations of one causal prefix
are constructed and sequentially consulted by \tvs{}.
For a query $q_e$ issued at time $t_e$, every view is compiled from
$V_{\leq t_e}$ independently of the query and answer content.
All policies use the same frozen \viss{} checkpoint $\theta^\star$, greedy
decoding, and the shared fixed configuration recorded below.

\subsection{Complementary Causal Views}

The recent view $x_{e,0}$ preserves local detail at the query frontier; the
event view $x_{e,1}$ adds sparse transitions from earlier in the prefix; and
the compact-history view $x_{e,2}$ trades historical spatial resolution for
greater temporal coverage. Table~\ref{tab:causal-view-family} defines these
observation operators. They share a causal endpoint, not an identical frame
grid: event and compact-history views are compiled separately and are not
strictly nested. Complementarity therefore concerns the observations they
provide, rather than a guarantee that every wider view preserves all earlier
source-frame IDs.

\begin{table}[!htbp]
\caption{Complementary causal views. Counts are per-call maxima; short
prefixes may supply fewer frames. Recent support uses the standard image
path, while compact historical images have maximum edge 224 pixels.}
\label{tab:causal-view-family}
\centering
\apptabsetup
\begin{tabularx}{\linewidth}{@{}lYYr@{}}
\toprule
View & Recent support & Historical support & Frames \\
\midrule
recent $x_{e,0}$ & 4 frames from the latest 4 seconds & none & 4 \\
event $x_{e,1}$ & the same recent 4 & up to 16 ranked frames, standard resolution & 20 \\
compact history $x_{e,2}$ & 8 frames from the latest 8 seconds
& up to 24 ranked frames, reduced resolution & 32 \\
\bottomrule
\end{tabularx}
\end{table}

For each historical view, we first reserve its recent-frame IDs. A uniform
grid of up to 64 positions spans the causal prefix; positions preceding the
earliest reserved frame supply historical candidates. We merge these with
the reserved recent frames, remove duplicate source IDs, and compute
$16\times16$ RGB descriptors $z_i$ in chronological order. Let
$\Delta z_i=z_i-z_{i-1}$ and let $R(\cdot)$ denote average-tie rank
normalization to $[0,1]$. Historical frames are ranked by the single fused
score
\begin{equation}
s_i = \frac{1}{15}\left[
4R\!\left(1-\cos(z_i,z_{i-1})\right)
+9R\!\left(1-\cos(\Delta z_i,\Delta z_{i+1})\right)
+2R(t_i)\right],
\label{eq:event-view-score}
\end{equation}
where $t_i$ is the source timestamp. The three terms capture appearance
novelty, curvature in visual evolution, and recency, respectively.
The ranking has no query-dependent term. Historical slots are filled by
ranked candidates, recent slots remain reserved, and selected frames are
presented chronologically. As an implementation convention, endpoint and
near-zero-norm curvature terms are zero; ties prefer the later timestamp and
then the smaller source-frame index. Recent windows use the Qwen sampler's
nominal 1\,FPS setting; endpoint rounding determines the exact frame grid.

\subsection{Sequential Evidence Selection}

For each acquired view $x_{e,k}$, the frozen checkpoint produces an answer
proposal $\hat y_k$ and a confidence margin $m_k$. The margin is the difference
between the two largest full-vocabulary probabilities at the first assistant
answer position.

\tvs{} begins from recent evidence. It consults the event view when the recent
proposal is uncertain, and consults compact history when the recent and event
views produce different answers. Acquisition changes the evidence presented
to the representation; commitment determines whether the resulting proposal
should replace the retained answer. Algorithm~\ref{alg:tvs} gives the complete
procedure.

\begin{algorithm}[H]
\caption{Sequential causal view acquisition and commitment}
\label{alg:tvs}
\small
\begin{algorithmic}[1]
\Require query $q_e$, causal prefix $V_{\leq t_e}$, compiler $\mathcal C$,
frozen checkpoint $\theta^\star$, thresholds $\tau_a,\tau_c$
\Ensure final answer $\hat y$
\State $(x_{e,0},x_{e,1},x_{e,2})\gets\mathcal C(V_{\leq t_e})$
\State $(\hat y_0,m_0)\gets\Call{DecodeMargin}{\theta^\star,q_e,x_{e,0}}$
\State $j\gets0$
\If{$m_0>\tau_a$}
    \State \Return $\hat y_j$
\EndIf
\State $(\hat y_1,m_1)\gets\Call{DecodeMargin}{\theta^\star,q_e,x_{e,1}}$
\If{$\hat y_1\neq\hat y_j$ \textbf{and} $m_1-m_j>\tau_c$}
    \State $j\gets1$
\EndIf
\If{$\hat y_1=\hat y_0$}
    \State \Return $\hat y_j$
\EndIf
\State $(\hat y_2,m_2)\gets\Call{DecodeMargin}{\theta^\star,q_e,x_{e,2}}$
\If{$\hat y_2\neq\hat y_j$ \textbf{and} $m_2-m_j>\tau_c$}
    \State $j\gets2$
\EndIf
\State \Return $\hat y_j$
\end{algorithmic}
\end{algorithm}

\subsection{Fixed Protocol and Validity}
\label{app:audits}

View construction and the two selection thresholds are fixed for both
benchmarks. Table~\ref{tab:tvs-fixed-config} records the configuration beyond
the view definitions in Table~\ref{tab:causal-view-family}.

\begin{table}[!htbp]
\caption{Fixed causal-view and search configuration used on both benchmarks.}
\label{tab:tvs-fixed-config}
\centering
\apptabsetup
\begin{tabularx}{\linewidth}{@{}lY@{}}
\toprule
Component & Setting \\
\midrule
Historical ranking
& $16\times16$ RGB descriptors; novelty, trajectory curvature, and recency
with weights $4{:}9{:}2$ \\
Candidate construction
& up to 64 prefix-grid positions, plus reserved recent IDs after filtering \\
Calibration
& 300 queries from 300 source-video-disjoint nonbenchmark videos \\
Acquisition threshold
& $\tau_a=0.9976896$ \\
Relative-margin tolerance
& $\tau_c=-0.5059$ \\
Inference budget
& at most 3 calls; at most 32 frames in any one call \\
Decoding
& frozen checkpoint, greedy decoding, first-answer full-vocabulary margin \\
\bottomrule
\end{tabularx}
\end{table}

The two thresholds are selected jointly on the 300-query disjoint calibration
set and then frozen for both benchmarks. We collect recent, event, and history
proposals and margins, use observed acquisition scores and disagreement-margin
differences as threshold candidates, and replay the complete controller for
each candidate pair. Selection maximizes calibration accuracy, breaking ties
by fewer processed frames and then fewer model calls.
The negative value of $\tau_c$ allows a disagreeing proposal to
replace the retained answer despite moderate cross-view variation in raw
margin; it should therefore be interpreted as a calibrated replacement
tolerance rather than a requirement for strictly higher confidence.

\stitle{Cost accounting.}
Compiler probes, unique model-exposed source IDs, and processed frames count
different operations. The 64-position prefix grid is not a cap on all
compiler inputs: recent IDs are added separately. The event-view records
contain up to 67 candidate descriptors. Likewise, 32 is a per-view limit,
not a bound on the union across calls. Executing all three views processes
at most $4+20+32=56$ frames, counting repeated exposure; short prefixes
reduce this total. Since recent-4 is included in the event view, the union
has the upper bound $20+32=52$ unique source IDs.
Table~\ref{tab:access-cost-full} reports actual mean calls
and processed frames, without treating these as wall-clock or token costs.

\stitle{Validity and audit scope.}
The compiler receives source frames, timestamps, and fixed budgets; questions,
answer options, predictions, and benchmark task labels do not enter its
ranking. Current complete event-view records cover 2{,}495 StreamingBench and
3{,}035 OVO-Bench queries with zero evaluation errors and no selected timestamp
beyond the query boundary, allowing $10^{-3}$ seconds for numerical rounding.
The separate strict-replay audit confirms complete query coverage, reproduction
of deployed \tvs{} answers, and exhaustive paired error accounting. These
checks establish timestamp compliance and replay consistency; they do not
independently establish that a selected view contains all evidence needed
to answer its query.

\section{Visible-State Supervision Data and Training}
\label{app:data-construction}

This section connects the visible-state targets to actual training records,
describes candidate verification, and records the training configuration.
The five temporal capabilities in the main text
describe what a question asks; the five mixture components below describe
where training records come from. These are not one-to-one categories.

\subsection{Supervision Units and Families}

A record contains a causal visual input, a question, its answer space, a
direct target, and source provenance. The temporal distinction is expressed
through the question--target pair, not through an extra latent-state label.
Table~\ref{tab:data-composition} gives the 7{,}677-row mixture. A component
may support several readouts: for example, a backward-state question can
require both event identity and temporal order.

\begin{table}[!htbp]
\caption{Supervision families in the sealed 7{,}677-example mixture.}
\label{tab:data-composition}
\centering
\apptabsetup
\begin{tabularx}{\linewidth}
{@{}>{\raggedright\arraybackslash}p{0.29\linewidth}Yr@{}}
\toprule
Family & Visible-state target & Rows \\
\midrule
source-aligned anchors
& general causal video understanding and capability retention
& 3{,}594 \\
repetition-counting probes
& accumulated event count over an ordered visual prefix
& 2{,}172 \\
prefix-32 state probes
& event state, temporal order, and accumulation over longer prefixes
& 1{,}012 \\
sufficiency anchors
& whether the available causal evidence supports an answer
& 116 \\
capability probes
& backward state retrieval, action sequence, and counting
& 783 \\
\midrule
Total & & 7{,}677 \\
\bottomrule
\end{tabularx}
\end{table}

The first three families form the 6{,}778-example core temporal mixture.
Sufficiency anchors teach the model to distinguish supported answers from
premature commitments. Capability probes extend the same visible-state
principle to backward memory, action sequence, and repetition-counting cases.

\stitle{Examples of the learning signal.}
Table~\ref{tab:supervision-records} shows two serialized records from the
training release. The first asks for completed repetitions, so merely naming
the visible action does not satisfy its target. The second asks for a gesture
before a later hand configuration, distinguishing temporal order from current
action recognition. These examples illustrate the intended readouts, not a
claim that every training question is impossible from a single frame.

\begin{table}[!htbp]
\caption{Actual training-record examples. Question text and targets are
preserved; common chat instructions and multiple-choice distractors are omitted.
Frame counts describe the serialized input, not the recent-4 evaluation view.}
\label{tab:supervision-records}
\centering
\apptabsetup
\begin{tabularx}{\linewidth}{@{}>{\raggedright\arraybackslash}p{0.19\linewidth}Y>{\raggedright\arraybackslash}p{0.17\linewidth}@{}}
\toprule
Source and input & Supervision question & Direct target \\
\midrule
RepCount annotations; 16 frames
& The frames are chronological causal observations. Count how many complete
situp repetitions have fully finished by this point. An incomplete motion
does not count. Reply with one integer only.
& \texttt{2} \\
\addlinespace[3pt]
Teacher capability probe; 12 frames
& What gesture did the woman make with her right hand before she began
speaking with her hand open and palm facing up?
& \texttt{D}: She pointed her index finger upwards. \\
\bottomrule
\end{tabularx}
\end{table}

\subsection{Source-Grounded Candidate Verification}

Direct probes are constructed from source annotations whenever the required
state and timestamp are available. Additional capability probes are proposed
by a teacher that receives source pixels, timestamps, and a family-specific
generation instruction. A construction firewall excludes benchmark media,
questions, labels, task metadata, student predictions, and evaluation
feedback from the proposal interface.

\stitle{Source binding and proposal checks.}
Generation requests specify the visible source frames and their ordering.
The builder joins a proposal to its request and source inventory by record
ID, checks the accepted decision and allowed confidence, rejects malformed
answers, and removes repeated normalized questions and benchmark-question
overlaps. Referenced images must belong to the supplied source inventory.
Evidence-group declarations and supporting facts, when included in the
proposal schema, are checked for internal consistency. Such declarations
remain teacher-generated evidence claims, not an independent visual proof.

\stitle{Answer stability without relabeling.}
The option-consensus filter presents the same source images and question
twice: once in the original option order and once in a deterministic
nonidentity permutation. The auditor does not receive the expected answer.
Both predictions must map back to the unchanged source target and satisfy
the configured confidence filter; disagreement removes the record rather
than replacing its target. For a teacher-proposed question, this checks the
proposed target, not an additional human annotation. The release check also
requires complete two-presentation coverage and no unresolved audit errors.

\stitle{What verification establishes.}
Source binding, causal input checks, overlap filtering, and option consistency
address distinct failure modes. They do not establish that an individual
question needs multiple frames or that its target is unambiguous to humans.
Repeated source pixels may also occur in distinct serialized observations;
media deduplication is not equivalent to removing every repeated exposure.
The retained capability set contains 783 probes: 452 backward-state,
236 action-sequence, and 95 repetition-counting probes.

We fix two empirical thresholds for \tvs{} (Appendix~\ref{app:tvs}).

\subsection{Optimization and Reproducibility}
\label{app:configs}

The sealed run starts from Qwen3-VL-8B-Instruct and uses the assistant-token
objective in Eq.~\ref{eq:viss-objective}. The language model and multimodal
projector are trainable; the vision tower is frozen.
Table~\ref{tab:training-reproducibility} groups the settings needed to reproduce
this run. Recent-4 is the fixed evaluation interface, not a restriction on
all training inputs: the release includes 4-, 12-, 16-, 24-, and 32-frame
records. Its serialized weights are also nonuniform: 6{,}281 rows have
$\alpha_i=1$ and 1{,}396 have $\alpha_i=8$.

\begin{table}[!htbp]
\caption{Reproducibility settings for the sealed \viss{} training run.}
\label{tab:training-reproducibility}
\centering
\apptabsetup
\begin{tabularx}{\linewidth}{@{}>{\raggedright\arraybackslash}p{0.28\linewidth}Y@{}}
\toprule
Setting & Value \\
\midrule
\multicolumn{2}{@{}l}{\textit{Model and input}} \\
Initialization & Qwen3-VL-8B-Instruct; fresh run from public weights \\
Trainable modules & language model and multimodal projector; vision tower frozen \\
Input limits & 8{,}192 tokens; 200{,}704 pixels per image \\
Loss & assistant targets only; serialized example weights \\
\addlinespace[3pt]
\multicolumn{2}{@{}l}{\textit{Data and optimization}} \\
Training data & 7{,}677 rows; mixed-length causal views \\
Ordering & 6{,}894-row parent stage, then 783 capability rows; stage shuffle seed 59 \\
Schedule & one epoch; AdamW; learning rate $2\times10^{-6}$; cosine decay \\
Regularization & warmup ratio 0.03; weight decay 0; gradient norm cap 1.0 \\
Batch & global 12; microbatch 1 per GPU; 6 accumulation steps \\
\addlinespace[3pt]
\multicolumn{2}{@{}l}{\textit{Execution}} \\
Precision and sharding & BF16, TF32 enabled; ZeRO-3 without CPU offload \\
Hardware and seed & 2$\times$NVIDIA H100 80GB; random seed 59 \\
Trainer runtime & 5{,}885.9\,s (98.1\,min), excluding data construction \\
\bottomrule
\end{tabularx}
\end{table}

Each record is rendered with the standard Qwen chat template. The user
message contains the ordered images, question, answer space, and direct-answer
instruction; the assistant target contains $y_i$. Source timestamps remain
in construction metadata unless explicitly included in the serialized prompt.
Rationales, teacher confidence, hidden
environment state, and benchmark metadata are excluded from the supervised
target. The training records, loss weights, and deterministic renderer
fully specify the training inputs.

\section{Controlled Experiments}
\label{app:controlled-experiments}
\label{app:supporting-ladders}

We organize the controlled experiments around the two factors isolated in the
main paper. Representation comparisons hold the recent-4 interface fixed and
vary the trained checkpoint. Access comparisons hold the frozen \viss{}
checkpoint fixed and vary only the causal view policy. For benchmark score
$S$, these comparisons follow the factorization in Section~\ref{sec:setup}, where
$\theta_0$ is the public initialization, $\theta^\star$ is the sealed \viss{}
checkpoint, and the recent-4 view realizes $\pi_0$.

\subsection{Representation under Fixed Recent Context}
\label{app:representation-control}

This comparison asks whether the \viss{} supervision formulation transfers
better than generic temporal post-training under matched training conditions. The paired
runs share the same 7{,}677 instances, visual inputs, weights, ordering,
optimization, initialization, seed, loss normalization, and recent-4
evaluation interface.

\stitle{Paired control construction.}
The pairing unit is a training-record slot with a fixed visual input and loss
weight. Write the two records as
$r_i^{G}=(x_i,q_i^{G},y_i^{G},\alpha_i)$ and
$r_i^{V}=(x_i,q_i^{V},y_i^{V},\alpha_i)$, where $x_i$ includes the ordered
source frames and their preprocessing. The row slot, visual exposure, and
weight are shared; the question and answer jointly express the generic or
visible-state formulation. This comparison evaluates the supervision
formulation under the shared training conditions, without separately
isolating question wording, label quality, or dependence on historical evidence.

\begin{table}[!htbp]
\caption{Strict-matched supervision control under recent-4 evaluation.}
\label{tab:representation-control}
\centering
\apptabsetup
\begin{tabularx}{\linewidth}{@{}>{\raggedright\arraybackslash}p{0.25\linewidth}YY@{}}
\toprule
Training attribute & Strict-matched generic & \viss{} \\
\midrule
Rows & 7{,}677 & 7{,}677 \\
Visual inputs and exposure & identical & identical \\
Weights, stages, and order & identical & identical \\
Optimization, initialization, seed & identical & identical \\
Question--target formulation & generic temporal QA & visible-state probes \\
StreamingBench & \SBStrictGeneric & \SBSealed \\
OVO-Bench & \OVOStrictGeneric & \OVOSealed \\
HLD / EPM / ASI / REC
& \StrictGenericHLD{} / \StrictGenericEPM{} / \StrictGenericASI{} / \StrictGenericREC{}
& \TaskHLD{} / \TaskEPM{} / \TaskASI{} / \TaskREC{} \\
\bottomrule
\end{tabularx}
\end{table}

\viss{} improves OVO-Bench by \OVOStrictGenericDelta{} points over the strict
control while remaining comparable on StreamingBench. HLD improves by
\HLDStrictGenericDelta{} points, corresponding to 15 additional correct answers
out of 186. All differences are computed before rounding the displayed scores.
Across paired seeds (Table~\ref{tab:paired-seeds}),
the mean OVO gap is \OVOPairedSeedGap{} and the mean HLD gap is
\HLDPairedSeedGap{}. Paired-bootstrap 95\% CIs for these three-seed mean effects are
[\OVOPairedBootstrapLow{}, \OVOPairedBootstrapHigh{}] and
[\HLDPairedBootstrapLow{}, \HLDPairedBootstrapHigh{}], respectively.
We use 10{,}000 source-video bootstrap replicates with a fixed random seed.
Each replicate applies the same video multiplicities to Generic and \viss{}
predictions across all three fixed training seeds, retaining every query from
each sampled video. We recompute the task--group--overall OVO macro within
each training seed and then average the three paired differences; HLD is
computed analogously within its task.

\begin{table}[!htbp]
\caption{Three paired training seeds, with Generic / \viss{} in each cell.
Seed rows give correct counts for StreamingBench ($N=2{,}495$) and HLD
($N=186$), and official macro scores for OVO. Mean scores are percentages;
mean paired differences are percentage points computed before rounding.}
\label{tab:paired-seeds}
\centering
\apptabsetup
\begin{tabular}{@{}lccc@{}}
\toprule
Seed pair & StreamingBench & OVO-Bench & HLD \\
\midrule
1 & 2{,}120 / 2{,}115 & 63.48 / 65.21 & 125 / 140 \\
2 & 2{,}117 / 2{,}113 & 63.21 / 64.90 & 124 / 138 \\
3 & 2{,}116 / 2{,}112 & 63.63 / 65.37 & 126 / 140 \\
\midrule
Mean score & \SBStrictGenericMean{} / \SBViSSMean{}
& \OVOStrictGenericMean{} / \OVOViSSMean{}
& \HLDStrictGenericMean{} / \HLDViSSMean{} \\
Mean paired difference & $\SBPairedSeedGap$ & $\OVOPairedSeedGap$ & $\HLDPairedSeedGap$ \\
\bottomrule
\end{tabular}
\end{table}

\subsection{Targeted Supervision Families}
\label{app:family-controls}
\label{app:representation-ablations}

Sufficiency is evaluated by removing its 116 examples from the complete
mixture under the same recent-4 interface. Removal changes visual exposure
and weighted training mass as well as supervision content; it is a component
sensitivity rather than an isolated estimate of sufficiency semantics.
Repetition-count supervision is
reported below only as an aggregate component sensitivity.

\begin{table}[t]
\caption{Sufficiency sensitivity under recent-4 evaluation.}
\label{tab:family-controls}
\label{tab:state-family-ablation-full}
\label{tab:representation-paired-full}
\centering
\begin{threeparttable}
\apptabsetup
\footnotesize

\textit{(a) Absolute benchmark and target-task scores.}

\vspace{2pt}
\begin{tabularx}{\linewidth}{@{}Yrrrrrr@{}}
\toprule
Configuration & Rows & SB & OVO & CT & REC & HLD \\
\midrule
full \viss{}
& 7{,}677 & \SBSealed & \OVOSealed
& \CTFull & \RECFull & \HLDFull \\
without sufficiency
& 7{,}561 & \SBNoSuff & \OVONoSuff
& \CTNoSuff & \RECNoSuff & \HLDNoSuff \\
\bottomrule
\end{tabularx}

\vspace{7pt}
\textit{(b) Paired query transitions for the corresponding contrasts.}

\vspace{2pt}
\begin{tabularx}{\linewidth}{@{}YYrrr@{}}
\toprule
Removed family & Benchmark & Repairs & Regressions & $p$ \\
\midrule
sufficiency & StreamingBench & 36 & 33 & 0.81 \\
sufficiency & OVO-Bench & 67 & 124 & $4.5{\times}10^{-5}$ \\
\bottomrule
\end{tabularx}

\begin{tablenotes}[flushleft]
\footnotesize
\item \emph{Paired accounting.} The reference is full \viss{}.
Repairs are queries corrected by the removed-sufficiency model; regressions are
queries lost by that model. The reported $p$ values use the two-sided exact
McNemar test.
They test paired query correctness, not differences in OVO's task-then-group
macro, and do not quantify variation across training seeds.
\end{tablenotes}
\end{threeparttable}
\end{table}

Removing sufficiency produces 124 regressions against 67 repairs on OVO-Bench
and concentrates its largest task-level decline on HLD. The corresponding
StreamingBench transition is nearly symmetric. Removing repetition-count
supervision changes aggregate performance only marginally: full \viss{}
scores \SBSealed{}/\OVOSealed{}, versus \SBNoRep{}/\OVONoRep{} without it,
so the overall gain is not driven by this component alone.

\subsection{Adaptive Access under Frozen Representation}
\label{app:access-costs}

All policies in this experiment share the frozen \viss{} checkpoint and the
same three query-independent causal views. Fixed-view controls isolate the
information supplied by each context. Selector controls test proposal choice.
Gate ablations change one decision rule at a time. Checkpoint identity is
matched; frame counts, resolutions, visual tokens, and calls are not.

\stitle{Fixed views and proposal selectors.}
Recent-only, event-only, and history-only each return one view's proposal.
Static-history-17 uniformly samples 17 causal frames on StreamingBench in one
call, approximating \tvs{}'s average processed-frame budget without adaptation.
Random-view selects one of the three by a fixed hash of the query record ID,
without consulting its content or predictions. Max-confidence evaluates all
three views and returns the largest margin, breaking ties in recent--event--
history order. The oracle also evaluates all three but uses the reference
answer to choose a correct proposal when one exists; it is analysis-only.

\stitle{Decision-rule ablations.}
Always-acquire-history makes two calls, recent then history, and applies the
same relative-margin criterion; it skips the event view. Without-acquisition-
gate always evaluates the event view, retaining both the disagreement trigger
for history and the commit criterion. Without-commit-margin preserves the
\tvs{} acquisition path but accepts every disagreeing acquired proposal.
Thus removing the acquisition gate does not force three calls, and removing
the commit margin does not change which views are requested.

\begin{table}[t]
\caption{Accuracy--cost trade-off under the frozen \viss{} representation.}
\label{tab:access-cost-full}
\centering
\begin{threeparttable}
\apptabsetup
\footnotesize
\begin{tabularx}{\linewidth}{@{}Yrrrrrr@{}}
\toprule
& \multicolumn{3}{c}{StreamingBench} & \multicolumn{3}{c}{OVO-Bench} \\
\cmidrule(lr){2-4}\cmidrule(l){5-7}
Access policy & Acc. & Calls/q & Frames/q & Macro & Calls/q & Frames/q \\
\midrule

\multicolumn{7}{@{}l}{\textit{Fixed one-view controls}} \\
recent only
& \SBSealed & \EffRecentCallsSB & \EffRecentFramesSB
& \OVOSealed & \EffRecentCallsOVO & \EffRecentFramesOVO \\
event only
& \SBEventOnly & \EffEventOnlyCallsSB & \EffEventOnlyFramesSB
& \OVOEventOnly & \EffEventOnlyCallsOVO & \EffEventOnlyFramesOVO \\
history only
& \SBHistoryOnly & \EffHistoryCallsSB & \EffHistoryFramesSB
& \OVOHistoryOnly & \EffHistoryCallsOVO & \EffHistoryFramesOVO \\
static history-17
& \SBStaticHistorySeventeen & \EffStaticHistoryCallsSB
& \EffStaticHistoryFramesSB
& -- & -- & -- \\
random view
& \SBRandomView & \EffRandomCallsSB & \EffRandomFramesSB
& \OVORandomView & \EffRandomCallsOVO & \EffRandomFramesOVO \\

\addlinespace[3pt]
\multicolumn{7}{@{}l}{\textit{Proposal selectors}} \\
max confidence
& \SBMaxConfidence & \EffMaxConfidenceCallsSB
& \EffMaxConfidenceFramesSB
& \OVOMaxConfidence & \EffMaxConfidenceCallsOVO
& \EffMaxConfidenceFramesOVO \\
\textbf{\tvs{}}
& \SBAgentConf & \EffTVSCallsSB & \EffTVSFramesSB
& \OVOAgentConf & \EffTVSCallsOVO & \EffTVSFramesOVO \\

\addlinespace[3pt]
\multicolumn{7}{@{}l}{\textit{Acquire--commit ablations}} \\
always acquire history
& \SBAlwaysHistory & \EffAlwaysHistoryCallsSB
& \EffAlwaysHistoryFramesSB
& \OVOAlwaysHistory & \EffAlwaysHistoryCallsOVO
& \EffAlwaysHistoryFramesOVO \\
without acquisition gate
& \SBNoAcquisition & \EffNoAcquisitionCallsSB
& \EffNoAcquisitionFramesSB
& \OVONoAcquisition & \EffNoAcquisitionCallsOVO
& \EffNoAcquisitionFramesOVO \\
without commit margin
& \SBNoCommit & \EffNoCommitCallsSB
& \EffNoCommitFramesSB
& \OVONoCommit & \EffNoCommitCallsOVO
& \EffNoCommitFramesOVO \\

\addlinespace[3pt]
\multicolumn{7}{@{}l}{\textit{Proposal-set upper bound}} \\
three-view oracle
& \SBOracleThree & \EffOracleCallsSB & \EffOracleFramesSB
& \OVOOracleThree & \EffOracleCallsOVO & \EffOracleFramesOVO \\
\bottomrule
\end{tabularx}
\begin{tablenotes}[flushleft]
\footnotesize
\item All entries use the same checkpoint and compiled views. Calls and
processed frames are averaged per query. Frames count input frame instances
summed across calls, including repeated frames, rather than unique source
frames. Compiler access is accounted for separately in Appendix~\ref{app:audits}.
\end{tablenotes}
\end{threeparttable}
\end{table}

\stitle{What the cost comparison supports.}
Against exhaustive max-confidence selection, \tvs{} uses fewer calls and
processed frames, improves OVO-Bench, and scores lower on StreamingBench.
Against a single history call, it uses fewer processed frames on
StreamingBench but almost the same number on OVO-Bench, while requiring more
calls on both. No universal latency advantage follows from these counts,
especially because historical image resolution varies across views.
At a comparable average StreamingBench frame budget (17.08 versus 17.00),
\tvs{} improves on static sampling by 0.64 points (86.33 versus 85.69),
using 1.60 rather than 1.00 calls per query.

The acquisition gate's clearest effect is cost: removing it raises processed
frames from 17.08 to 27.26 on StreamingBench and from 30.33 to 33.31 on
OVO-Bench, with accuracy changes of only $-0.04$ and $-0.05$ points.
Removing the commit margin changes accuracy from 86.33 to 86.53 and from
69.77 to 69.75, respectively. This ablation does not show a consistent
accuracy advantage for the margin rule; it also leaves frames and calls
unchanged. The calibrated answer-selection criterion defines the
deployed policy; the trace analysis in Appendix~\ref{app:error-analysis}
examines its remaining decision failures.

\subsection{Task-Wise Representation and Access Effects}
\label{app:tasks}

To localize the two interventions, we report each OVO task after the public
initialization, visible-state supervision, test-time view search, and oracle
proposal selection. The columns follow the same representation--access
progression as Section~\ref{sec:method}.

Figure~\ref{fig:results} in the main text summarizes
where each intervention helps, the resulting accuracy--cost trade-off, and
the attribution of the remaining oracle gap analyzed in
Appendix~\ref{app:error-analysis}; Table~\ref{tab:ovo-task-factorization}
gives the underlying task scores. In panel (b) of that figure, connected
filled markers show recent-4, \tvs{}, and the three-view oracle, and open
markers show fixed wider views and exhaustive max-confidence selection.

\begin{table}[!htbp]
\caption{Task-wise OVO accuracy across representation and access stages.
Task names follow \citet{li2025ovobench}; RT, Bwd, and Fwd denote real-time,
backward, and forward groups. All four stages use the official scoring rule.}
\label{tab:ovo-task-factorization}
\centering
\apptabsetup
\footnotesize
\begin{tabularx}{\linewidth}{@{}lYrrrr@{}}
\toprule
Group & Task & Base & \viss{} & $+\tvs{}$ & Oracle \\
\midrule
RT & Optical character recognition (OCR) & \TaskOCRBase & \TaskOCR & \TaskOCRTVS & \TaskOCROracle \\
RT & Attribute recognition (ATR) & \TaskATRBase & \TaskATR & \TaskATRTVS & \TaskATROracle \\
RT & Object recognition (OJR) & \TaskOJRBase & \TaskOJR & \TaskOJRTVS & \TaskOJROracle \\
RT & Action recognition (ACR) & \TaskACRBase & \TaskACR & \TaskACRTVS & \TaskACROracle \\
RT & Future prediction (FPD) & \TaskFPDBase & \TaskFPD & \TaskFPDTVS & \TaskFPDOracle \\
RT & Spatial understanding (STU) & \TaskSTUBase & \TaskSTU & \TaskSTUTVS & \TaskSTUOracle \\
\addlinespace[2pt]
Bwd & Hallucination detection (HLD) & \TaskHLDBase & \TaskHLD & \TaskHLDTVS & \TaskHLDOracle \\
Bwd & Action sequence identification (ASI) & \TaskASIBase & \TaskASI & \TaskASITVS & \TaskASIOracle \\
Bwd & Episodic memory (EPM) & \TaskEPMBase & \TaskEPM & \TaskEPMTVS & \TaskEPMOracle \\
\addlinespace[2pt]
Fwd & Sequential steps recognition (SSR) & \TaskSSRBase & \TaskSSR & \TaskSSRTVS & \TaskSSROracle \\
Fwd & Clues reveal responding (CRR) & \TaskCRRBase & \TaskCRR & \TaskCRRTVS & \TaskCRROracle \\
Fwd & Repetition event count (REC) & \TaskRECBase & \TaskREC & \TaskRECTVS & \TaskRECOracle \\
\midrule
\multicolumn{2}{l}{Official three-group macro}
& \OVOInstruct & \OVOSealed & \OVOAgentConf & \OVOOracleThree \\
\bottomrule
\end{tabularx}
\end{table}

The fixed-view Backward gain is primarily driven by HLD
(\OVOInstructHLD{} to \HLDFull{}), while EPM improves modestly and ASI
declines slightly. We therefore distinguish hallucination robustness from
event recall when interpreting this group, following
\citet{shen2026simplestream}. With the checkpoint frozen, \tvs{} improves EPM
and ASI but reduces HLD and the real-time average. Access also helps forward
reasoning, particularly REC and CRR. These effects support a task-dependent
account of supervision and access rather than uniform improvement across
temporal capabilities. REC reaches only \TaskRECOracle{} under oracle proposal selection.
This shows limits of the tested views, but does not establish REC as the
largest source of unresolved queries.

\stitle{Scope of external comparisons.}
\label{app:protocols}
Our controlled StreamingBench evaluations use letter-exact accuracy over all
2{,}495 real-time queries; OVO-Bench uses the official three-group macro over
real-time, backward, and forward tasks. Published systems retain their
original settings: EventMemAgent uses up to 32 frames; SimpleStream reports
RTVU and averages only real-time and backward OVO; VideoChat3 uses 4\,FPS with
up to 4{,}096 frames on StreamingBench and a 2\,FPS, 32-frame OVO window;
CurveStream and FOLIO report partial OVO task groups. We treat these rows as
protocol-specific context rather than strictly matched comparisons.
In particular, SimpleStream's two-group OVO average is not directly
comparable to our three-group macro. We retain the official task-then-group
scoring rule throughout and use task-wise diagnostics to interpret it, without
substituting a flat average over tasks.
The Gemini HLD value follows the current official OVO-Bench repository's
scoring example (52.69, with backward average 62.54).\footnote{\url{https://github.com/JoeLeelyf/OVO-Bench\#inference-and-score}}

\FloatBarrier
\section{Error Attribution: View Recoverability and Decision Stability}
\label{app:error-analysis}

We analyze errors through three concepts: whether a broader view makes an
error recoverable, whether \tvs{} selects an available correct proposal, and
whether the policy preserves a correct retained answer. For diagnosis only,
strict replay computes all three view proposals; it does not alter deployed
inference cost.

\subsection{Recoverability, Recovery, and Preservation}
\label{app:error-decomposition-full}

Let $\mathcal E_0$ denote the queries answered incorrectly from the recent
view. We further define
\begin{align}
\mathcal R
&=\{e\in\mathcal E_0:\exists k\in\{1,2\},\ \hat y_{e,k}=y_e\},\\
\mathcal G
&=\{e\in\mathcal R:\hat y_e^{\mathrm{TVS}}=y_e\},\\
\mathcal O
&=\{e:\hat y_{e,0}=y_e,\ \hat y_e^{\mathrm{TVS}}\neq y_e\}.
\end{align}
Here $\mathcal R$ contains recent-view errors for which the fixed view family
provides a correct alternative, $\mathcal G$ contains the cases recovered by
\tvs{}, and $\mathcal O$ contains correct recent answers lost during search.

We summarize the decomposition through three conditional diagnostics:
\begin{equation}
\rho_{\mathrm{view}}=\frac{|\mathcal R|}{|\mathcal E_0|},
\qquad
\rho_{\mathrm{recover}}=\frac{|\mathcal G|}{|\mathcal R|},
\qquad
\rho_{\mathrm{preserve}}=1-\frac{|\mathcal O|}{N-|\mathcal E_0|}.
\label{eq:error-diagnostics}
\end{equation}
They measure whether the view family contains the missing answer, whether
\tvs{} recovers an available answer, and whether it preserves an already
correct recent proposal.

\begin{table}[!htbp]
\caption{Conditional diagnostics of view availability and proposal selection.
Counts underlying each rate are shown in parentheses.}
\label{tab:error-decomposition-full}
\centering
\apptabsetup
\begin{tabularx}{\linewidth}
{@{}>{\raggedright\arraybackslash}p{0.27\linewidth}Yrr@{}}
\toprule
Diagnostic & Interpretation & StreamingBench & OVO-Bench \\
\midrule
View recoverability
& A broader causal view contains a correct proposal
& 43.7\% (\SBRecoverable/\SBRecentErrors)
& 37.1\% (\OVORecoverable/\OVORecentErrors) \\
Policy recovery
& \tvs{} returns the correct answer when one is available
& 63.3\% (\SBFlowViewRepaired/\SBRecoverable)
& 71.1\% (\OVOFlowViewRepaired/\OVORecoverable) \\
Recent-answer preservation
& \tvs{} retains an initially correct recent proposal
& 96.9\% (\SBFlowRecentKept/\SBRecentCorrect)
& 92.3\% (\OVOFlowRecentKept/\OVORecentCorrect) \\
\bottomrule
\end{tabularx}
\end{table}

The complementary all-view unresolved rates, 56.3\% on StreamingBench and
62.9\% on OVO-Bench, quantify the portion of recent-view errors outside the
current proposal set. Proposal selection alone cannot recover these cases.

Recent-answer preservation is 96.9\% on StreamingBench and 92.3\% on
OVO-Bench. Together, successful recoveries and lost recent answers yield net
gains of \SBNetCorrected{} and \OVONetCorrected{} correctly answered queries.
The StreamingBench count
corresponds to $2{,}154/2{,}495=86.33\%$; OVO counts remain diagnostic because
its official score is a task-then-group macro.

\subsection{Where Correct Proposals Are Lost}

For each oracle-gap query, we inspect the deployed execution trace. A query is
assigned to an \emph{acquisition miss} when no correct proposal is computed,
an \emph{acceptance miss} when a computed correct proposal is never retained,
and a \emph{replacement error} when a retained correct proposal is later
replaced. This ordered attribution produces mutually exclusive categories.

\begin{table}[!htbp]
\caption{Attribution of the remaining query-level gap to the three-view
oracle. Percentages are conditioned on oracle-gap cases.}
\label{tab:controller-failure-attribution}
\centering
\apptabsetup
\begin{tabularx}{\linewidth}
{@{}>{\raggedright\arraybackslash}p{0.25\linewidth}Yrr@{}}
\toprule
Failure mechanism & Trace-level criterion & StreamingBench & OVO-Bench \\
\midrule
Acquisition miss
& No correct proposal is computed along the deployed path
& \SBTraceNotAcquired{} (25.2\%) & \OVOTraceNotAcquired{} (19.6\%) \\
Acceptance miss
& A correct proposal is computed but never retained
& \SBTraceRejected{} (7.9\%) & \OVOTraceRejected{} (8.5\%) \\
Replacement error
& A retained correct proposal is subsequently replaced
& \SBTraceOverridden{} (66.9\%) & \OVOTraceOverridden{} (71.9\%) \\
\midrule
Total
& Errors containing at least one correct replayed proposal
& \SBOracleGapCases & \OVOOracleGapCases \\
\bottomrule
\end{tabularx}
\end{table}

Replacement errors dominate the remaining oracle gap, indicating that
stability after retaining a correct proposal is the main selection limitation
within this subset. Acquisition misses are the second-largest component;
direct rejection of an acquired correct proposal is less common.
Replacement counts include both initially correct recent answers and correct
intermediate event answers subsequently lost at the history stage. They are
therefore larger than the recent-answer regressions in
Table~\ref{tab:error-decomposition-full}.

\subsection{Failures Beyond the Current Proposal Set}

For 214 StreamingBench queries and 759 OVO-Bench queries, none of the three
views produces the reference answer. These cases constitute the majority of
recent-view errors on both benchmarks and establish that proposal selection
alone cannot close the remaining gap.

An unresolved query may reflect a decisive frame omitted by the compiler,
failed interpretation despite adequate evidence, or visual ambiguity.
Separating these causes requires dense-prefix evaluation or an independent
evidence-coverage audit. We therefore use ``all-view unresolved'' rather than
``representation error''.

\clearpage
\onecolumn
\section{Qualitative Case Studies}
\label{app:qualitative-case-study}

\subsection{A Correct Answer Acquired from History}

Figure~\ref{fig:qualitative-case-study} instantiates the
representation--access distinction on one causal query, showing the exact
frames, answer proposals, confidence margins, and sequential commitment
decisions.

\begin{figure}[H]
\centering
\includegraphics[height=0.75\textheight]{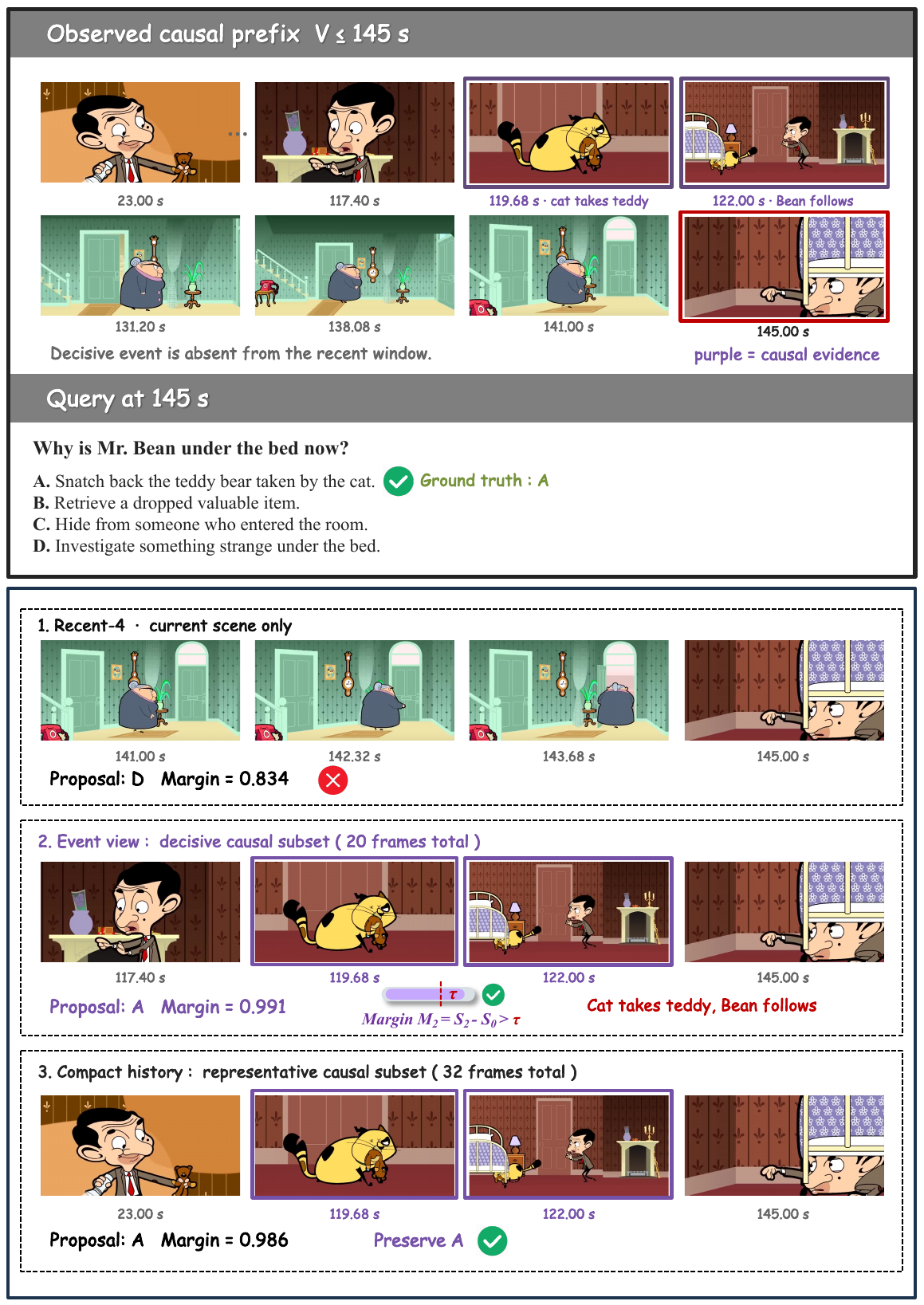}
\vspace{-0.45em}
\caption{Qualitative case study of causal view acquisition and proposal
commitment. Recent-4 omits the event in which the cat takes the teddy bear and
proposes D. The event view exposes the decisive evidence and proposes A; the
compact-history view independently supports A, which the acquire--commit
policy preserves.}
\label{fig:qualitative-case-study}
\end{figure}

\clearpage
\subsection{A Correct Answer Lost during Revision}
\label{app:replacement-case}

A complementary failure appears in StreamingBench record
\texttt{streaming\_57\_0}, from \texttt{sample\_137\_real} at 12 seconds.
The question and options are:
\begin{quote}
\small
\emph{Which of the following best describes the initial actions shown in
the clip?}

A. A hand is painting directly on a canvas.\\
B. A plastic sheet is covered with paint stains, then a name appears.\\
C. A blank canvas is being prepared for painting.\\
D. Paint is being dispensed onto a plastic sheet covering a canvas.
\end{quote}
The reference is D. Table~\ref{tab:replacement-case} reconstructs the
deployed decisions from the frozen-checkpoint view records and strict replay,
without generating new proposals or using the reference answer in the policy.

\begin{table}[!htbp]
\caption{A recorded replacement error. The recent answer is correct, but
both subsequent proposals pass the calibrated relative-margin criterion.
Margins are rounded for display.}
\label{tab:replacement-case}
\centering
\apptabsetup
\begin{tabularx}{\linewidth}{@{}Yrrrr@{}}
\toprule
Acquired view & Frames & Proposal & Margin & Retained \\
\midrule
recent & 4 & D & 0.5084 & D \\
event & 20 & B & 0.1228 & B \\
compact history & 29 & C & 0.2416 & C \\
\bottomrule
\end{tabularx}
\end{table}

\stitle{Acquisition and commitment have different effects.}
The recent margin triggers event acquisition because it is below
$\tau_a=0.9976896$. The event proposal has a lower raw margin, but its change
relative to the retained answer is approximately $-0.3856$, which exceeds
$\tau_c=-0.5059$; B therefore replaces the correct D. Recent--event
disagreement triggers history independently of this commitment. Its C
proposal then replaces B with a relative margin change of $+0.1188$.
The final answer is wrong even though a correct proposal was available
and initially retained.

\stitle{What the trace establishes.}
This is a replacement error in the sense of
Appendix~\ref{app:error-analysis}, not a case where the view family lacks a
correct proposal. It illustrates why the calibrated rule cannot be described
as accepting only higher-confidence answers. It does not establish that a
different threshold would improve aggregate performance, nor does it identify
which image caused the proposal to change.

All recorded frame timestamps lie at or before 12 seconds. The recent view
uses 8.00, 9.33, 10.67, and 12.00 seconds; the broader views also include
earlier observations. This path processes 53 frame entries but exposes 39
unique source IDs, illustrating the distinction between per-view capacity,
cross-call exposure, and repeated processing. Record IDs, exact margins,
and frame-index lists for this example accompany the manuscript source.

\clearpage
\subsection{A Correct Historical Proposal Not Acquired}
\label{app:acquisition-miss-case}

At 6 seconds, StreamingBench record \texttt{streaming\_461\_0} has reference
B (center), but recent-4 proposes A (left). Its margin, 0.999630, exceeds
$\tau_a\approx0.9977$, so \tvs{} stops after one call and four frames.
The correct event and history proposals in
Figure~\ref{fig:case-untriggered-history} are replay-only, not answers visited
by the deployed policy. Their earlier observations retain a left neighbour
absent from recent-4. This is an acquisition miss, not evidence that changing
the threshold would improve aggregate accuracy.

\begin{figure}[H]
\centering
\includegraphics[height=0.72\textheight,width=\textwidth,keepaspectratio]{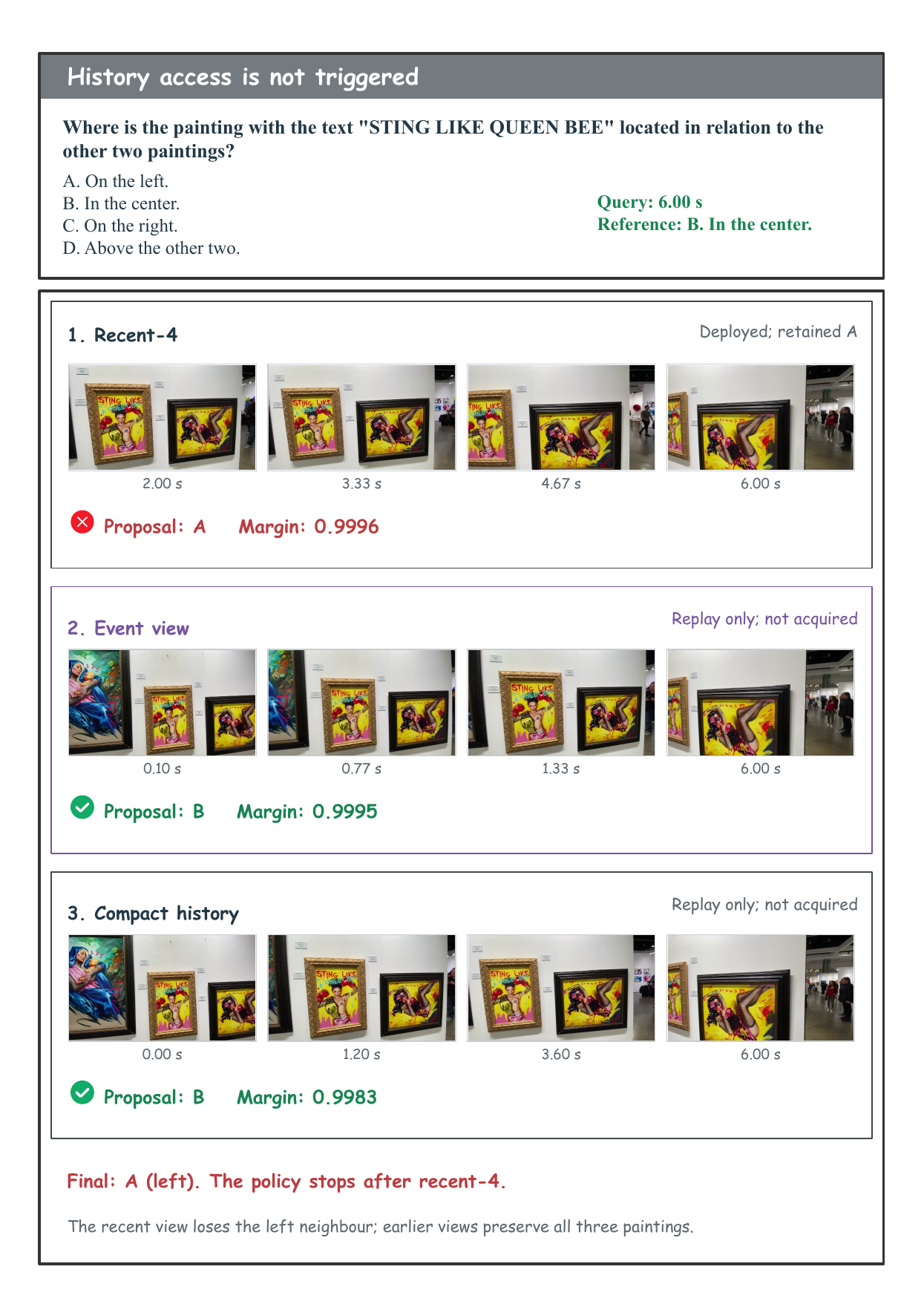}
\caption{\textbf{Corrective historical evidence is not accessed.}
Four actual frames per view are displayed; complete inputs contain 4 recent,
20 event, and 6 history frames. Earlier frames show all three paintings,
while recent-4 loses the left neighbour. All proposals use the same frozen
\viss{} checkpoint; only recent-4 is executed by \tvs{}.}
\label{fig:case-untriggered-history}
\end{figure}

\clearpage
\subsection{All-View Unresolved with Counting-Scope Ambiguity}
\label{app:all-view-unresolved-case}

At 10 seconds, record \texttt{streaming\_38\_1} has reference B (three), but
all three views propose C (four). Recent and event agree, so \tvs{} stops
after two calls and 24 processed frames; history is replay-only. Source frame
300 at 10.00 seconds belongs to all three inputs and shows three framed
works beside an electronic display (Figure~\ref{fig:case-all-view-unresolved}).
The recorded proposals cannot recover the reference, but the counting scope
remains ambiguous. This is not an established representation failure, nor
can we determine whether the model included the display in its count.

\begin{figure}[H]
\centering
\includegraphics[height=0.72\textheight,width=\textwidth,keepaspectratio]{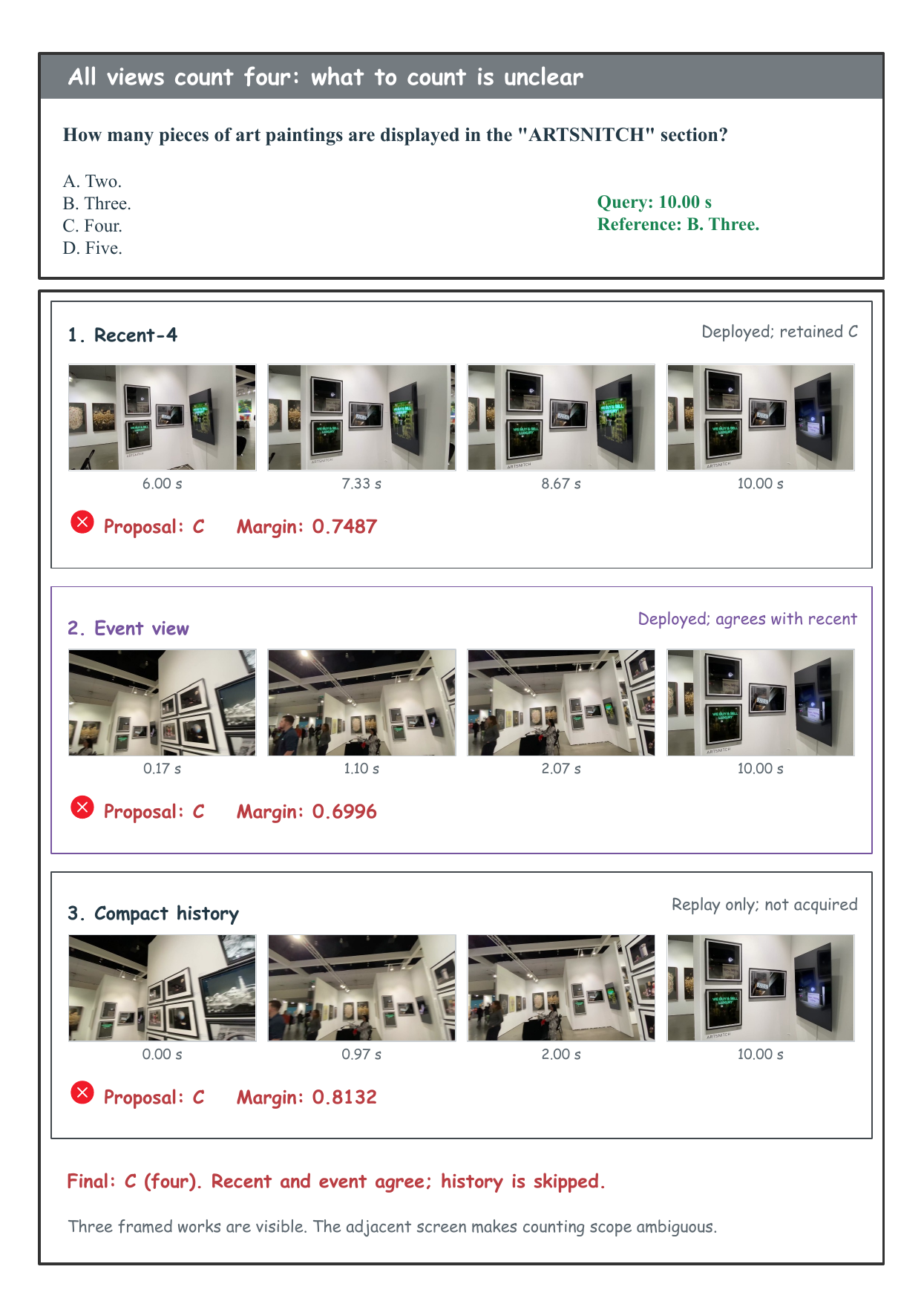}
\caption{\textbf{All-view failure with ambiguity about what to count.}
Four actual frames per view are displayed; complete inputs contain 4 recent,
20 event, and 21 history frames. The same frozen \viss{} checkpoint is used
throughout. Exact traces for both cases are provided in
\texttt{supplementary/viss\_failure\_cases.json}; no new proposals are generated.}
\label{fig:case-all-view-unresolved}
\end{figure}

\clearpage
\subsection{Additional Archived-Run Comparisons}
\label{app:additional-qualitative}

Figures~\ref{fig:case-movie-inserts} and~\ref{fig:case-next-formula} present
selected archived-run comparisons. The two controls share a checkpoint;
ours uses the ViSS checkpoint. Checkpoints and, for expanded views, visual
encodings differ, so these are qualitative examples rather than isolated
ablations. Rows show actual input frames, with subsets displayed for longer
views; all frames remain within the observed prefix.

\begin{figure}[H]
\centering
\includegraphics[height=0.73\textheight,width=0.95\textwidth,keepaspectratio]{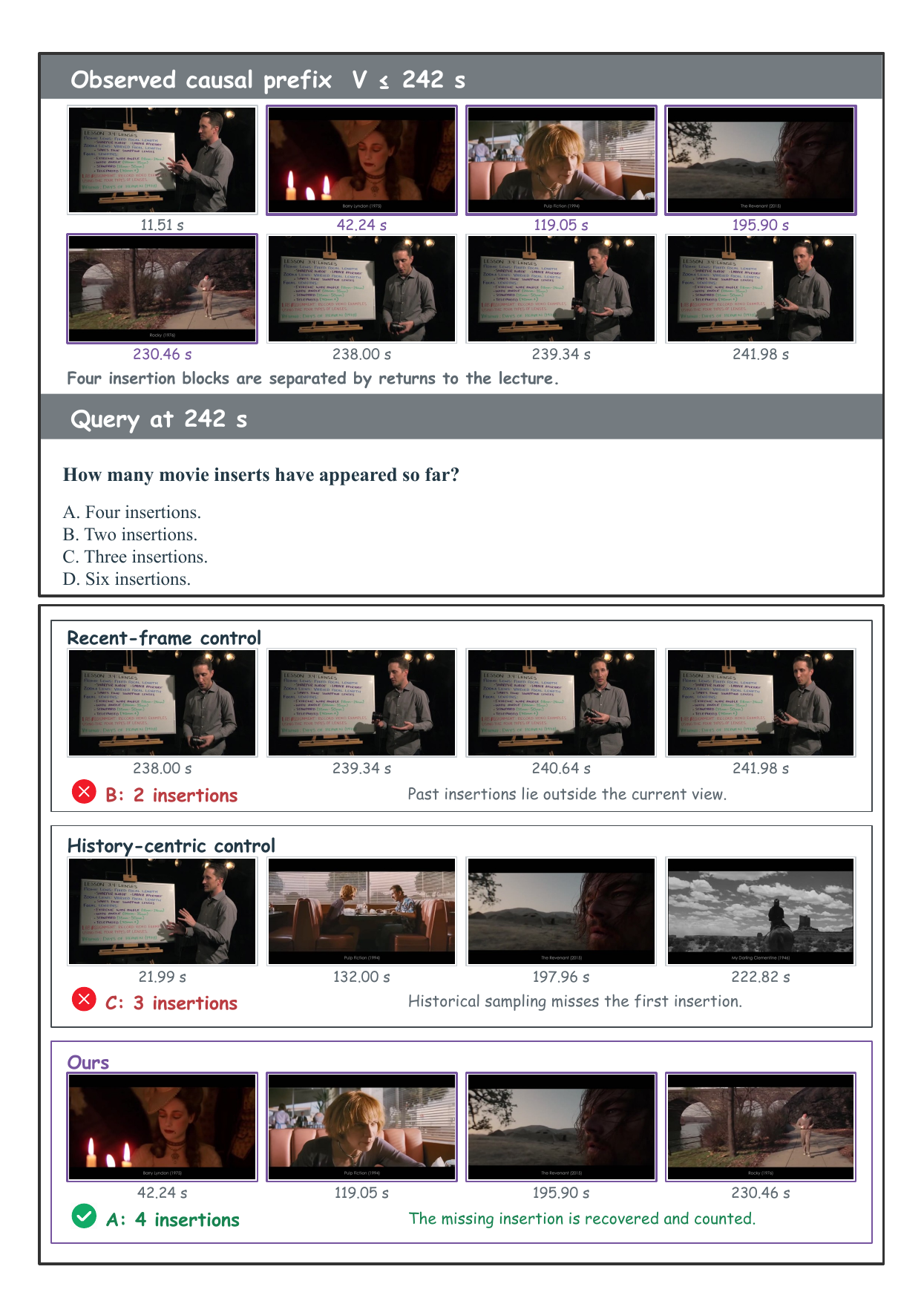}
\vspace{-0.3em}
\caption{\textbf{Accumulating movie insertions.} At 242 seconds, the recent-frame
control predicts two insertions and the historical-search control predicts
three. Historical sampling omits the first insertion block. Our causal-access
run exposes this missing evidence and returns four, matching the reference.
Each insertion block may contain several consecutive film excerpts.}
\label{fig:case-movie-inserts}
\end{figure}

\clearpage
\begin{figure}[H]
\centering
\includegraphics[height=0.81\textheight,width=0.95\textwidth,keepaspectratio]{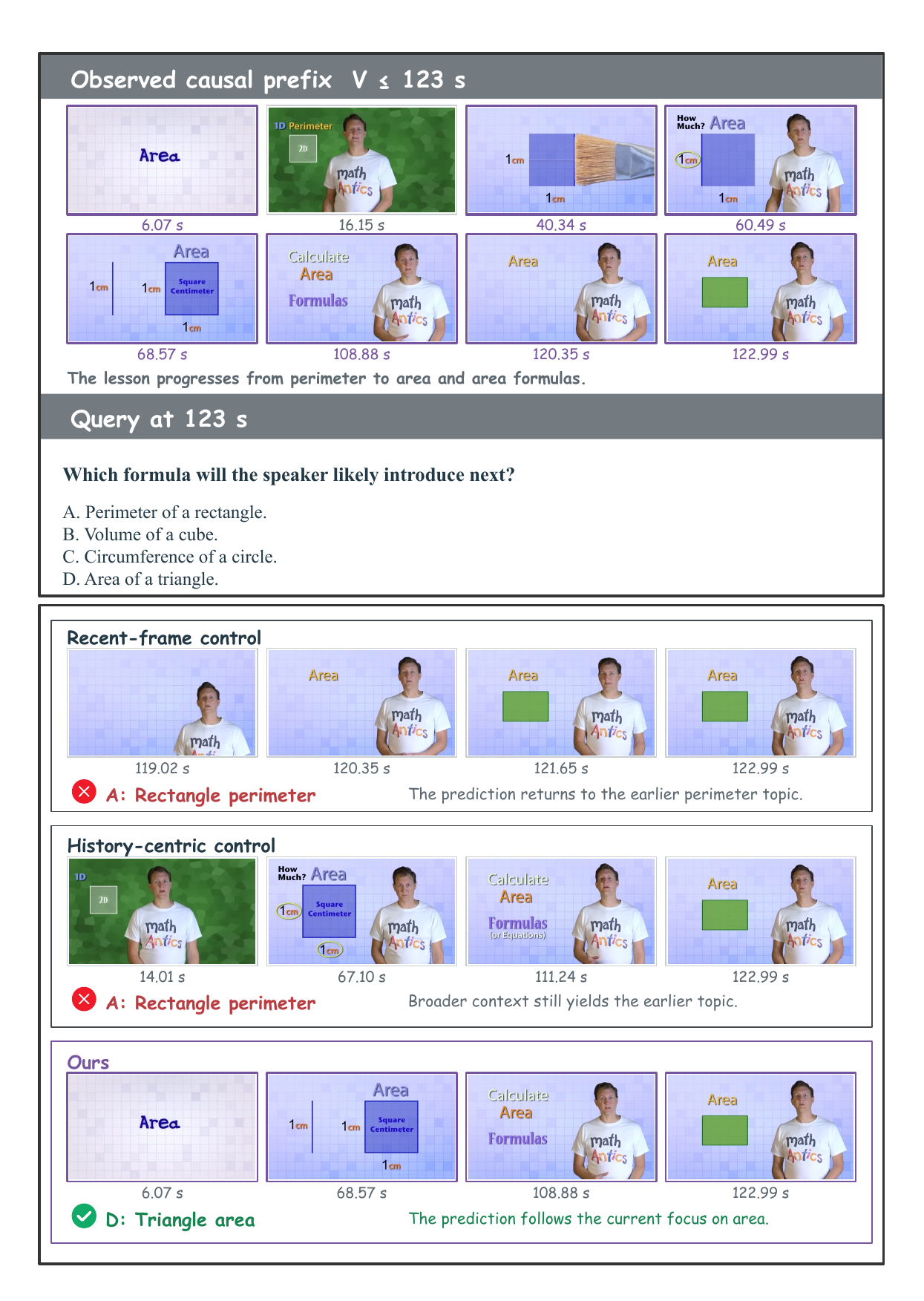}
\vspace{-0.3em}
\caption{\textbf{Following the current lesson topic.} The observed lesson moves
from perimeter to area and area formulas. Both controls predict rectangle
perimeter (A), while our run predicts the annotated triangle-area option (D).
The displayed prefix establishes the current area topic; no future frames
are exposed to the model.}
\label{fig:case-next-formula}
\end{figure}

\clearpage
\section{Theoretical Analysis}
\label{app:theory}

This appendix makes the two levers precise. Proposition~\ref{prop:underspec}
formalizes state under-specification (\cone{}) and shows what a separating
probe adds to an answer-level objective. Proposition~\ref{prop:monotone}
shows that for a Bayes-optimal readout a wider causal view never loses
accuracy and no commitment rule can beat it. The replacement errors in
Section~\ref{sec:results-controllers} concern the deployed readout and
selection policy within a fixed proposal set. Propositions~\ref{prop:threshold}
and~\ref{prop:commit} give conditions under which the margin gate of \tvs{}
is the optimal acquisition rule and explain why its calibrated commit
threshold is negative. All statements concern population quantities and
unrestricted predictor classes: they describe what the objectives and
decision rules can and cannot enforce, not finite-sample behavior.

\subsection{Setting}
\label{app:theory-setting}

Let $\mathcal H$ be the set of causal prefixes (histories), $\mathcal Q$ the
set of questions, and $a:\mathcal Q\times\mathcal H\to\mathcal Y$ the
reference answer map into a finite answer set. A predictor factors as in
Eq.~\ref{eq:representation-readout}: $f(q,h)=G(F(g(h)),q)$, where $g$ is a
fixed view constructor, $F$ maps a view to a representation, and $G$ is a
readout returning a distribution over $\mathcal Y$; $G(z,q)[y]$ denotes the
probability assigned to $y$. A supervision family is a finitely supported
distribution $D$ over pairs $(q,h)$, and its population risk under log loss is
\begin{equation}
R_D(F,G)=\mathbb E_{(q,h)\sim D}\bigl[-\log G(F(g(h)),q)[a(q,h)]\bigr].
\label{eq:theory-risk}
\end{equation}
$F$ and $G$ range over all measurable maps. Call $D$ \emph{view-consistent}
if $a(q,h)=a(q,h')$ whenever $g(h)=g(h')$ and both $(q,h)$ and $(q,h')$
belong to its support. Under this condition, $\inf R_D=0$ is attained by
an injective $F$ and a readout assigning the reference answer to each view
and question. Without it, information lost by $g$ can leave positive Bayes risk.
For a view $x$, define its question--answer support across the entire class
$g^{-1}(x)$ by
$\mathcal L_D(x)=\{(q,a(q,h)):(q,h)\in\operatorname{supp}(D),\ g(h)=x\}$.
We say that $D$ \emph{does not separate} two view classes when their
$\mathcal L_D$ sets are equal.
For example, an action-recognition family can leave once-versus-three
repetitions indistinguishable; a counting family can separate them.

\subsection{State Under-Specification}

\begin{proposition}[Answer-level supervision leaves unasked distinctions free]
\label{prop:underspec}
Let $h_1\neq h_2$ have distinct views $x_1=g(h_1)$ and $x_2=g(h_2)$.
Suppose $D$ is view-consistent and
$\mathcal L_D(x_1)=\mathcal L_D(x_2)$.
\begin{enumerate}[label=(\roman*),nosep,leftmargin=*]
\item For every minimizer $(F,G)$ of $R_D$, the merged pair $(F',G)$ with
$F'(g(h_2))=F(g(h_1))$ and $F'=F$ elsewhere is also a minimizer. The
objective therefore never requires the representation to keep any attribute
in which $h_1$ and $h_2$ differ, be it a count, an order, or a relation.
\item Let $D_p$ be a probe family with a question $q_p$ such that
$a(q_p,h_1)\neq a(q_p,h_2)$ and
$w=\min\{D_p(q_p,h_1),\allowbreak D_p(q_p,h_2)\}>0$, and
let $D'=\lambda D+(1-\lambda)D_p$ with $\lambda\in[0,1)$, and assume $D'$
is view-consistent. Every $(F,G)$ with
$F(g(h_1))=F(g(h_2))$ satisfies $R_{D'}(F,G)\geq 2(1-\lambda)\,w\log 2$,
while $\inf R_{D'}=0$. Hence every minimizer of $R_{D'}$, and every
$\varepsilon$-approximate minimizer with $\varepsilon<2(1-\lambda)w\log 2$,
keeps $F(g(h_1))\neq F(g(h_2))$.
\end{enumerate}
\end{proposition}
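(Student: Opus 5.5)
For part (i), I will first show that every minimizer of $R_D$ attains risk $0$, using the remark in the Setting that view-consistency gives $\inf R_D=0$, attained by an injective $F$. Risk $0$ under log loss with finite support means $G(F(g(h)),q)[a(q,h)]=1$ for every $(q,h)\in\operatorname{supp}(D)$. Now let $(F,G)$ be a minimizer and write $z_1=F(x_1)$. The merged representation $F'$ differs from $F$ only at $x_2$. So the risk can change only on support points with $g(h)=x_2$. For such a point $(q,h)$, the hypothesis $\mathcal L_D(x_1)=\mathcal L_D(x_2)$ gives some $(q,h')$ in the support with $g(h')=x_1$ and $a(q,h')=a(q,h)$. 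Since $(F,G)$ has zero risk, $G(z_1,q)[a(q,h')]=1$. Hence $G(F'(x_2),q)[a(q,h)]=1$, and every term of $R_D(F',G)$ is zero, so $(F',G)$ is a minimizer. The clause about attributes (counts, orders, relations) then follows directly: $F'$ carries no information distinguishing $x_1$ from $x_2$.

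For part (ii), I will fix any $(F,G)$ with $F(x_1)=F(x_2)=z$. The two terms of $R_{D'}$ at $(q_p,h_1)$ and $(q_p,h_2)$ have weights at least $(1-\lambda)w$ each. Both read the same distribution $p=G(z,q_p)$, evaluated at distinct answers $y_1\neq y_2$. Their combined contribution is therefore at least $(1-\lambda)w\,[-\log p_{y_1}-\log p_{y_2}]$. Under the constraint $p_{y_1}+p_{y_2}\le1$, this bracket is minimized at $p_{y_1}=p_{y_2}=1/2$, by concavity of $\log$ (equivalently AM--GM: $p_{y_1}p_{y_2}\le 1/4$). That gives a value of $2\log2$, hence the bound $2(1-\lambda)w\log 2$, since all other terms are nonnegative. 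The equality $\inf R_{D'}=0$ is again the Setting's remark applied to the view-consistent $D'$. The minimizer and $\varepsilon$-approximate claims then follow by contraposition: any $(F,G)$ with risk below $2(1-\lambda)w\log2$ cannot merge $x_1$ and $x_2$.

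The main subtlety lies in (i). I must make sure the zero-risk characterization of minimizers holds, which requires attainment, provided by the injective construction. I must also make sure the hypothesis on $\mathcal L_D$ is used in the right direction. Equality of the sets covers both directions, but only the direction from $x_2$ to $x_1$ is needed, because $F'$ changes only at $x_2$. A minor point is that $F'$ must remain measurable. This is immediate because it modifies $F$ at a single point.
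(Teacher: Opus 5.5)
Your proposal is correct and follows the paper's proof essentially step for step. In (i), attainment of $\inf R_D=0$ forces every minimizer to put probability one on the target at each support point, and you transfer targets from $x_2$ to $x_1$ via $\mathcal L_D(x_2)\subseteq\mathcal L_D(x_1)$; in (ii), you bound the two shared-distribution terms by $2(1-\lambda)w\log 2$ using $p[y_1]p[y_2]\le 1/4$, invoke view-consistency of $D'$ for $\inf R_{D'}=0$, and obtain the minimizer claims by contraposition, exactly as the paper does (your remarks on the needed inclusion direction and on measurability of $F'$ are harmless refinements it leaves implicit).
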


\begin{proof}
(i) Since $\inf R_D=0$ and every term of Eq.~\ref{eq:theory-risk} is
nonnegative, a minimizer places probability one on $a(q,h)$ at every support
point. The merged pair agrees with $(F,G)$ at every support point with
$g(h)\neq x_2$. For any support point $(q,h)$ with $g(h)=x_2$,
$(q,a(q,h))\in\mathcal L_D(x_2)=\mathcal L_D(x_1)$, so a support point
$(q,\bar h)$ with $g(\bar h)=x_1$ has the same target. Hence
$G(F'(g(h)),q)=G(F(x_1),q)$ assigns probability one to $a(q,h)$. Thus
$R_D(F',G)=0$.
(ii) If $F(g(h_1))=F(g(h_2))=z$, the readout returns one distribution
$p=G(z,q_p)$ at both $(q_p,h_1)$ and $(q_p,h_2)$, whose reference answers
$y_1\neq y_2$ are distinct. These two support points contribute at least
$(1-\lambda)\,w\,(-\log p[y_1]-\log p[y_2])$ to $R_{D'}$, and
$p[y_1]+p[y_2]\leq 1$ gives $p[y_1]\,p[y_2]\leq 1/4$, so the bracket is at
least $2\log 2$. All other terms are nonnegative. View-consistency of $D'$
ensures that an injective $F$ on its support views and a suitable $G$ attain
the infimum $0$.
\end{proof}

\begin{remark}[The view can hide the distinction]
If $g(h_1)=g(h_2)$, the two histories are the same input to every predictor
and no supervision separates them; the distinction must be restored by the
view constructor. This is \ctwo{}, and it is why \tvs{} changes $g$ while
$\theta^\star$ stays frozen. The semantic-stability check of
Section~\ref{sec:method-viss} tests invariance to option permutation; it does
not establish the view-consistency assumption or human unambiguity of a target.
\end{remark}

Part (i) formalizes \cone{} for view classes with identical supervised
question--answer support: the objective permits merging their representations.
Part (ii), under view-consistency, is the design rule behind the probe families of
Section~\ref{sec:method-viss}: one separating question per distinction, with
positive weight, is enough to make every minimizer keep the distinction in
$z$. Whether benchmark readouts then use it is an empirical question, which
Section~\ref{sec:results-data} answers at a matched budget.

\subsection{Wider Views and Commitment}

We now fix the checkpoint and vary the view, as \tvs{} does. Let $(q,h)$ be
drawn from a query distribution, let $x=g_0(h)$ be the recent view and
$x'=g_1(h)$ a wider view that contains it, in the sense that $x=\phi(x')$ for
a fixed map $\phi$ (the event view includes the four recent frames). Because
a view discards part of the prefix, the reference answer is not a function
of the view, and we write $P(y\mid x,q)$ for its conditional law. A
\emph{readout} on a view is any measurable decision $\delta(x,q)\in\mathcal Y$;
its accuracy is $A(\delta)=P(\delta(x,q)=a(q,h))$, and the Bayes readout
$\delta^\star_x(x,q)=\argmax_y P(y\mid x,q)$ attains the Bayes accuracy
$A^\star(x)=\mathbb E[\max_y P(y\mid x,q)]$.

\begin{proposition}[A wider view never loses for a Bayes readout]
\label{prop:monotone}
If $x=\phi(x')$, then $A^\star(x')\geq A^\star(x)$. Moreover, let
$\delta_0$ and $\delta_1$ be any fixed readouts on $x$ and $x'$ with margins
$m_0(x,q)$ and $m_1(x',q)$, and let $\rho$ be any commitment rule that
returns one of the two proposals as a function of
$(\delta_0,\delta_1,m_0,m_1)$. Then $A(\rho)\leq A^\star(x')$, and if
$\delta_1=\delta^\star_{x'}$ then $A(\rho)\leq A(\delta_1)$: no commitment
rule improves on always taking the wider view's proposal.
\end{proposition}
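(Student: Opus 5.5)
The plan is to reduce every claim to a single fact: whenever $x$ is a measurable function of $x'$, $P(y\mid x,q)$ is a conditional expectation of $P(y\mid x',q)$. Concretely, $x=\phi(x')$ gives $\sigma(x,q)\subseteq\sigma(x',q)$. The tower property then yields
\[
P(y\mid x,q)=\mathbb E\bigl[P(y\mid x',q)\,\big|\,x,q\bigr].
\]
For the first inequality, I bound the maximum of a conditional expectation by the conditional expectation of the maximum:
\[
\max_y \mathbb E[P(y\mid x',q)\mid x,q]\le \mathbb E[\max_y P(y\mid x',q)\mid x,q].
\]
Taking outer expectations gives $A^\star(x)\le A^\star(x')$. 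Because $\mathcal Y$ is finite, this is just a finite max of integrable functions, so no measurability issue arises.

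For the commitment claim, the key observation is that $\rho$ is itself a readout on $(x',q)$. Indeed, $\delta_0(x,q)=\delta_0(\phi(x'),q)$ and $m_0(x,q)=m_0(\phi(x'),q)$, while $\delta_1,m_1$ are already functions of $(x',q)$. Hence the returned proposal $\rho(\delta_0,\delta_1,m_0,m_1)$ is a measurable map of $(x',q)$ into $\mathcal Y$. Call it $\tilde\delta(x',q)$.

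Any readout $\tilde\delta$ on $x'$ satisfies
\[
A(\tilde\delta)=\mathbb E\bigl[P(\tilde\delta(x',q)\mid x',q)\bigr]\le \mathbb E\bigl[\max_y P(y\mid x',q)\bigr]=A^\star(x').
\]
The equality follows by conditioning the event $\{\tilde\delta=a(q,h)\}$ on $(x',q)$. This gives $A(\rho)\le A^\star(x')$. If $\delta_1=\delta^\star_{x'}$, then $A(\delta_1)=A^\star(x')$ by definition, and the final inequality follows immediately.

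I expect the main obstacle to be bookkeeping rather than mathematics. I must make sure that conditioning on $(x',q)$ is legitimate, so that $a(q,h)$ has the stated conditional law given the view. I must also check that $\rho$ sees no information beyond $(x',q)$: it must not use $h$ or the reference answer, which rules out the oracle. Finally, tie-breaking in $\argmax$ should be fixed measurably. Once these are stated, the proof is a few lines.
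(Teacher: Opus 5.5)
Your proof is correct. Your treatment of the commitment claim is exactly the paper's: once you write $\delta_0(\phi(x'),q)$ and $m_0(\phi(x'),q)$, the rule $\rho$ is a measurable decision on $(x',q)$, so it cannot beat the Bayes accuracy there.

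You differ only on the first inequality. The paper does not use the tower property. It notes that $\delta^\star_x\circ\phi$ is itself a decision on $(x',q)$ whose accuracy is $A^\star(x)$, so $A^\star(x)\le A^\star(x')$ follows from the same fact used in the second claim: the Bayes readout on $x'$ is optimal among all $(x',q)$-measurable decisions. You instead write $P(y\mid x,q)$ as a conditional expectation of $P(y\mid x',q)$ and pass the max inside.

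Your route gives a stronger, pointwise statement, $\max_y P(y\mid x,q)\le \mathbb E[\max_y P(y\mid x',q)\mid x,q]$. It says the wider view helps conditionally on every recent view, not only on average. The paper's route is shorter and handles both claims with one lemma. You already prove that lemma in your second paragraph, as the bound $A(\tilde\delta)\le A^\star(x')$ for every readout $\tilde\delta$ on $x'$. Applying it to $\tilde\delta=\delta^\star_x\circ\phi$ would give you the first claim directly, without the conditional-expectation step.

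Your bookkeeping caveat is the right one: $\rho$ must not see $h$ or the reference answer. That is exactly what excludes the three-view oracle, consistent with the paper's remark that the oracle is not a Bayes-risk estimate.
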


\begin{proof}
Every decision that depends on $x$ depends on $x'$ through $\phi$, so it is
a measurable function of $(x',q)$; the Bayes readout on $x'$ maximizes
accuracy over all such functions, which gives the first claim with
$\delta=\delta^\star_x\circ\phi$. For the second, the proposals and margins
are deterministic functions of $(x',q)$ once the readouts are fixed, so
$\rho$ is itself a measurable decision on $(x',q)$ and
$A(\rho)\leq A^\star(x')$; when $\delta_1$ is the Bayes readout the
right-hand side equals $A(\delta_1)$.
\end{proof}

Proposition~\ref{prop:monotone} separates two readings of the replacement
errors in Section~\ref{sec:results-controllers}. For a Bayes readout on the
wider view, replacing the retained answer can still be wrong on individual
queries, but no rule that keeps the retained answer more often can raise
accuracy. The deployed readout need not be Bayes-optimal, so this result
does not imply that proposal selection cannot improve its accuracy. The
three-view oracle uses reference answers and is not a Bayes-risk estimate;
its gap diagnoses selection failures within the current proposal set.
The commit-margin ablation in Section~\ref{sec:results-data} evaluates one
particular selection rule, not the potential of all commitment rules.

\subsection{The Acquisition Gate and the Commit Threshold}

Let $M=m_0(x,q)$ be the recent-view margin, let
$p_0(m)=P(\delta_0\ \text{correct}\mid M=m)$, and let
$p_1(m)=P(\text{final answer correct}\mid M=m,\ \text{acquire})$ under the
fixed downstream rule of \tvs{}. Let $c(m)$ be the expected number of
additional frames processed when acquiring at margin $m$, at most the event
and history budgets $c_1+c_2$. A gate is a measurable set
$A\subseteq\mathbb R$ of margins at which the event view is acquired, and its
value at frame price $\lambda\geq 0$ is
$J(A)=P(\text{correct})-\lambda\,\mathbb E[\text{frames}]$.

\begin{proposition}[The margin gate is the optimal acquisition rule]
\label{prop:threshold}
Write $\Delta(m)=p_1(m)-p_0(m)$ for the gain from acquiring at margin $m$.
The gate maximizing $J$ is $A^\star=\{m:\Delta(m)>\lambda\,c(m)\}$. If
$\Delta(m)-\lambda c(m)$ is non-increasing in $m$ (for instance, $\Delta$
non-increasing and $c$ constant), then $A^\star=\{m<\tau_a\}$ with
$\tau_a=\sup\{m:\Delta(m)>\lambda c(m)\}$: the optimal rule acquires exactly
when the recent margin falls below a threshold, as \tvs{} does, and its
expected frame count is
$c_0+P(M<\tau_a)\bigl[c_1+P(\text{disagree}\mid M<\tau_a)\,c_2\bigr]$ with
$c_0,c_1,c_2$ the per-view frame budgets.
\end{proposition}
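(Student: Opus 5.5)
The plan is to write the objective as an integral over the margin distribution and then maximize pointwise. Let $\mu$ be the law of $M$. For a gate $A$, condition on $M=m$. If $m\notin A$, the policy returns $\delta_0$ after one call: the correctness probability is $p_0(m)$ and the frame count is $c_0$. If $m\in A$, the downstream rule of \tvs{} runs: the correctness probability is $p_1(m)$ and the expected frame count is $c_0+c(m)$. By the tower property,
\begin{equation}
J(A)=\int\bigl[p_0(m)-\lambda c_0\bigr]\,d\mu(m)+\int_A\bigl[\Delta(m)-\lambda c(m)\bigr]\,d\mu(m).
\end{equation}
The first term does not depend on $A$. The second term is maximized by including exactly those $m$ whose integrand is positive, which gives $A^\star=\{m:\Delta(m)>\lambda c(m)\}$. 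Any other gate differs from $A^\star$ either by adding points with nonpositive integrand or by dropping points with positive integrand, and neither change can raise $J$. I would state optimality up to $\mu$-null sets and note that ties at $\Delta=\lambda c$ are immaterial. Measurability of $A^\star$ follows because $p_0$, $p_1$ and $c$ are versions of conditional expectations given $M$, hence Borel functions of $m$.

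Next I would prove the threshold form. Set $\psi(m)=\Delta(m)-\lambda c(m)$ and assume $\psi$ is non-increasing. Then $\{\psi>0\}$ is a down-set of $\mathbb R$: if $\psi(m)>0$ and $m'<m$, then $\psi(m')\geq\psi(m)>0$. A down-set is an interval $(-\infty,\tau_a)$ or $(-\infty,\tau_a]$ with $\tau_a=\sup\{m:\psi(m)>0\}$. The two forms differ only at the single point $\tau_a$. That point either has $\mu$-measure zero, or, if it is an atom, has $\psi(\tau_a)\le 0$ in the open case, so including or excluding it does not improve $J$. Hence $\{m<\tau_a\}$ is optimal, up to the same tie convention. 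The parenthetical example follows immediately: if $\Delta$ is non-increasing and $c$ is constant, then $\psi$ is non-increasing.

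Finally I would compute the frame count. Under the gate $\{M<\tau_a\}$, the recent call always costs $c_0$. The event call, costing $c_1$, is made exactly on $\{M<\tau_a\}$. The history call, costing $c_2$, is made exactly when, in addition, the recent and event proposals disagree (Algorithm~\ref{alg:tvs}). Taking expectations and factoring the joint probability as $P(M<\tau_a)P(\text{disagree}\mid M<\tau_a)$ gives the stated formula. This step reads per-view frame counts as the budgets $c_0,c_1,c_2$, consistent with the definition of $c(m)$ as at most $c_1+c_2$. I would note that for short prefixes these budgets are upper bounds rather than equalities.

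The main obstacle is bookkeeping, not mathematics. I need to check that $p_1$ is defined with the downstream rule held fixed, so that the decomposition above is exact. I also need to settle the boundary and atom issues at $\tau_a$; I would handle these by stating optimality almost surely with respect to $\mu$.
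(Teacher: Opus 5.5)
Your proof is correct and follows the paper's argument: the same decomposition of $J(A)$ into an $A$-independent term plus $\mathbb E\bigl[\mathbf 1\{M\in A\}(\Delta(M)-\lambda c(M))\bigr]$, maximized pointwise, then the lower-set argument under monotonicity, then the frame count read off the cascade, with more care than the paper about measurability and ties. One small slip is in your boundary step, which treats only the open case: if $\{\Delta-\lambda c>0\}=(-\infty,\tau_a]$ and $\tau_a$ is an atom of $\mu$, then excluding $\tau_a$ strictly lowers $J$, so the optimal gate is $\{m\le\tau_a\}$ rather than $\{m<\tau_a\}$ (a corner the paper also leaves as ``up to its boundary point'').
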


\begin{proof}
$J(A)=\mathbb E[p_0(M)]-\lambda c_0+\mathbb E\bigl[\mathbf 1\{M\in A\}
\,(\Delta(M)-\lambda c(M))\bigr]$, and the last expectation is maximized
pointwise by including $m$ exactly when $\Delta(m)>\lambda c(m)$. If
$\Delta-\lambda c$ is non-increasing, this set is a lower set of the real
line, hence $\{m<\tau_a\}$ up to its boundary point. The frame count follows
from the cascade of Section~\ref{sec:method-agent}: the history view is
acquired only after the event view and only on disagreement.
\end{proof}

The monotonicity of $\Delta$ says that additional evidence is worth less
when the current answer is already confident; it is the assumption behind
confidence-gated computation
\citep{kadavath2022know,kamath2020selective,snell2025scaling}, and it can be
checked on the calibration holdout by binning $M$. Without a frame price
($\lambda=0$) the optimal gate acquires whenever $\Delta>0$, so a finite
$\tau_a$ expresses a positive price on frames rather than a belief that the
wider view could hurt.

\begin{proposition}[A negative commit threshold is a calibration offset]
\label{prop:commit}
Suppose disagreeing proposals $j$ (retained) and $k$ (new) have correctness
probabilities that depend on the margins only through view-specific
increasing calibration maps,
$P(j\ \text{correct}\mid m_j,m_k)=\varphi_j(m_j)$ and
$P(k\ \text{correct}\mid m_j,m_k)=\varphi_k(m_k)$. The accuracy-maximizing
commitment rule takes $k$ iff $\varphi_k(m_k)>\varphi_j(m_j)$. If the two
maps differ by a shift, $\varphi_k(m)=\varphi_j(m+\delta)$, this rule reads
$m_k-m_j>-\delta$, which is the \tvs{} rule with $\tau_c=-\delta$.
\end{proposition}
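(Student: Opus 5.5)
The plan is a pointwise Bayes-decision argument for a binary choice, followed by inverting the calibration map.

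\emph{Step 1: reduce to a pointwise choice.} Condition on the margin pair $(m_j,m_k)$ at a disagreement event. The two proposals differ, so at most one of them equals the reference answer. A commitment rule is a measurable map from $(m_j,m_k)$ to $\{j,k\}$. Conditional on the margins, its accuracy is $\varphi_k(m_k)$ if it takes $k$ and $\varphi_j(m_j)$ if it keeps $j$, by the stated calibration assumption. The overall accuracy is the expectation of this conditional accuracy over the margin distribution. Exactly as in the proof of Proposition~\ref{prop:threshold}, this expectation is maximized pointwise: take $k$ exactly when $\varphi_k(m_k)>\varphi_j(m_j)$. Ties contribute equally either way, so breaking them toward retention, as the strict inequality does, loses nothing. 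Measurability of the rule follows from measurability of the $\varphi$'s.

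\emph{Step 2: translate into a margin difference.} Substituting the shift relation $\varphi_k(m)=\varphi_j(m+\delta)$ turns the criterion into $\varphi_j(m_k+\delta)>\varphi_j(m_j)$. Because $\varphi_j$ is increasing, this is equivalent to $m_k+\delta>m_j$, that is, $m_k-m_j>-\delta$. This matches the \tvs{} commit test in Algorithm~\ref{alg:tvs} with $\tau_c=-\delta$.

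\emph{Main obstacle.} The step needing care is this inversion, which requires $\varphi_j$ to be \emph{strictly} increasing, or at least injective, on the relevant range. If $\varphi_j$ is only nondecreasing, the equivalence holds only up to flat regions. On those regions both choices have equal conditional accuracy, so the threshold rule remains \emph{an} optimal rule, though not the unique one. I would state the result that way, with uniqueness claimed only under strict monotonicity.

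\emph{Interpretation.} A positive $\delta$ means the new view reaches a given correctness probability at a lower raw margin. Since $\tau_c=-\delta$, a negative calibrated $\tau_c$ corresponds to a positive $\delta$: wider-view margins are systematically deflated relative to recent-view margins.
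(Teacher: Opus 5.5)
Your proof is correct and follows essentially the same argument as the paper: choose, pointwise in the margins, the proposal with the larger conditional correctness probability, then invert $\varphi_j$ under the shift to obtain $m_k-m_j>-\delta$. Your caveat is a valid refinement that the paper's proof leaves implicit when it just says ``increasing'': the final equivalence needs $\varphi_j$ strictly increasing, and with only weak monotonicity the threshold rule is still optimal but no longer unique on flat regions.
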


\begin{proof}
Given the margins, keeping $j$ is correct with probability $\varphi_j(m_j)$
and taking $k$ with probability $\varphi_k(m_k)$, so picking the larger
probability maximizes expected accuracy pointwise. Under the shift,
$\varphi_k(m_k)>\varphi_j(m_j)$ iff $\varphi_j(m_k+\delta)>\varphi_j(m_j)$
iff $m_k+\delta>m_j$, because $\varphi_j$ is increasing.
\end{proof}

A negative calibrated $\tau_c$ (Section~\ref{sec:method-agent}) is thus what
one expects when wider views yield lower raw margins at equal accuracy, as
happens when more frames spread probability over more plausible answers; the
holdout calibration estimates $\delta$ without modeling $\varphi_j$
explicitly. Proposition~\ref{prop:monotone} adds the caveat that, for a Bayes
readout on the wider view, the best value of this rule is to always accept,
so any accuracy the rule adds in practice is a symptom of readout
miscalibration that it partially corrects.

\end{document}